\documentclass{article}

\usepackage{arxiv}

\usepackage[utf8]{inputenc} 
\usepackage[T1]{fontenc}    
\usepackage{hyperref}       
\usepackage{url}            
\usepackage{booktabs}       
\usepackage{amsfonts}       
\usepackage{nicefrac}       
\usepackage{microtype}      
\usepackage{graphicx}
\usepackage{natbib}
\usepackage{doi}

\usepackage{researchpack}
\usepackage[dvipsnames]{xcolor}
\usepackage[capitalise,nameinlink]{cleveref}
\usepackage{multirow}
\usepackage{colortbl}
\usepackage{wrapfig}
\usepackage{pifont}

\usepackage{algorithm}
\usepackage[noend]{algpseudocode}

\hypersetup{
    colorlinks=true,
    linkcolor=CadetBlue,
    citecolor=CadetBlue,
}

\newcommand{\smallcitep}[1]{\scalebox{0.6}{\citep{#1}}}
\definecolor{tablegray}{RGB}{212, 212, 212}

\newcommand{\cmark}{\ding{51}}

\newcommand{\CCMARK}{\textcolor{SeaGreen}{\cmark\kern-.5em\cmark}}

\newcommand{\Act}{\ensuremath{s}}
\newcommand{\vAct}{\ensuremath{\vs}}
\newcommand{\calAct}{\ensuremath{\calS}}
\newcommand{\Deltaj}{\ensuremath{\delta{}\vw^{(j)}}}
\newcommand{\logdiffj}{\ensuremath{\delta\mu^{(j)}(\vAct)}}

\newcommand{\AXp}{\ensuremath{\mathsf{AXp}}\xspace}

\newcommand{\TopKAct}{\texttt{TopK-Act}\xspace}
\newcommand{\TopKWei}{\texttt{TopK-Wei}\xspace}
\newcommand{\TopKProd}{\texttt{TopK-Prod}\xspace}
\newcommand{\OriginalExactAlgorithm}{\texttt{EA}\xspace}
\newcommand{\OurExactAlgorithm}{\texttt{EA+}\xspace}

\newcommand{\PTwo}{\textbf{P2}\xspace}

\newcommand{\LaBo}{\texttt{LaBo}\xspace}
\newcommand{\LFCBM}{\texttt{LF-CBM}\xspace}
\newcommand{\VLGCBM}{\texttt{VLG-CBM}\xspace}

\newcommand{\GLasso}{\texttt{glasso}\xspace}
\newcommand{\ENet}{\texttt{enet}\xspace}

\newcommand{\CUB}{\texttt{CUB}\xspace}
\newcommand{\CelebA}{\texttt{CelebA}\xspace}
\newcommand{\Derm}{\texttt{Derm7pt}\xspace}
\newcommand{\WBC}{\texttt{WBCAtt}\xspace}

\title{When Are Concept Bottleneck Model \\ Explanations Faithful and Compact?}
\author{
    Stefano Teso \\
    CIMeC \& DISI \\
    University of Trento \\
    {\tt stefano.teso@unitn.it}
    \And
    Emanuele Marconato \\
    Department of Mathematical Sciences \\
    University of Copenhagen \\
    {\tt emanuele.marconato@math.ku.dk}
    \AND
    Steve Azzolin \\
    DISI \\
    University of Trento \\
    {\tt steve.azzolin@unitn.it}
    \And
    Antonio Vergari \\
    School of Informatics \\
    University of Edinburgh \\
    {\tt avergari@ed.ac.uk}
}

\renewcommand{\shorttitle}{When Are Concept Bottleneck Model Explanations Faithful and Compact?}

\hypersetup{
pdftitle={A template for the arxiv style},
pdfsubject={q-bio.NC, q-bio.QM},
pdfauthor={David S.~Hippocampus, Elias D.~Striatum},
pdfkeywords={First keyword, Second keyword, More},
}

\begin{document}
\maketitle

\begin{abstract}
	Concept bottleneck models (CBMs) are neural classifiers that allow to explain their decisions via  high-level concepts, potentially enabling understanding, steering and debugging.
    However, 
    their explanations are often derived \textit{heuristically}.
    Building on \textit{formal} explainability -- we argue they should also be faithful, \ie not misreport which concepts actually matter.
    We show that, for widespread CBM architectures, including recent VLM-based variants,
    faithful explanations must include all concepts in the bottleneck, compromising interpretability when this is large.  This result applies to \textit{both} heuristic and faithful-by-construction {formal explanations}.
    To encourage the existence of compact faithful explanations, we suggest \textit{i}) modeling concepts \textit{probabilistically} as binary or categorical random variables (rather than logits), and \textit{ii}) employing per-concept training-time sparsification via \textit{group lasso} (rather than regular elastic net).  We also
    extend algorithms from formal explainability to CBMs, and show they outperform natural heuristics in terms of guarantees and explanation size.
    Overall, our work warns against na\"ive interpretability claims and provides formal conditions and practical strategies for ensuring CBMs are as interpretable as advertised.
\end{abstract}

\section{Introduction}

Concept bottleneck models (CBMs) \citep{koh2020concept, ji2025comprehensive, knab2026s} are ``explainable-by-design'' neural classifiers that predict high-level concepts (\eg objects present in an input image) and use these to infer a label (\eg  the image's class), see \cref{fig:page-two} (left).
By enabling stakeholders to inspect the set of concept activations -- the titular \textit{bottleneck} -- CBMs aim to support interpretation \citep{bogani2026are}, steering \citep{shin2023closer}, and interactive debugging \citep{stammer2021right}.
Yet, reliable interaction requires CBMs to produce explanations that are both \textit{\textbf{compact}} \citep{ramaswamy2023overlooked, schrodi2025selective}, thus supporting understanding, and \textit{\textbf{faithful}} to the CBM's decision process, so as not to mislead stakeholders \citep{marques2023logic, teso2023leveraging}.

Neither property is guaranteed in practice.
On the one hand, bottlenecks can include hundreds or even thousands \citep{yang2023language} of activations, well beyond what stakeholders can grasp.
The go-to solution is to compute a small explanation -- that is, a subset of the bottleneck -- $\calE$ using heuristic algorithms, \eg by taking the $k$ highest activations, but see \cref{fig:page-two} and \cref{sec:compactness-faithfulness-algorithms} for alternatives \citep{yuksekgonul2023post, yang2023language, oikarinen2023label, srivastava2024vlgcbm}.

On the other, as we will show, \textit{heuristic explanations can be unfaithful}, \ie they may misreport which concepts do vs. do not determine the prediction.
Specifically, a subset $\calE$ that supports different predictions is insufficient to justify either \citep{marques2023logic, darwiche2023logic, soria2026formal}.  To see why, consider the heuristic explanation $\calE$ in \cref{fig:page-two} (right), computed on the \CUB bird classification dataset \citep{wah2011caltech}.  It supports predictions for two different classes, {\tt Vesper Sparrow} and {\tt Black-billed Cuckoo}.  Since the activations in $\calE$ are identical for both predictions, these cannot be determined by $\calE$ alone:  they must depend on activations \textit{outside} $\calE$.

We seek to understand when compact faithful explanations can be obtained.
As a first contribution, we demonstrate that this is not always possible:  unless a CBM is sufficiently constrained, \textit{\textbf{faithful explanations must include all concepts in the bottleneck}}, no matter what algorithm is used to compute them.
We show that this depends on how a CBM constrains the concept activations and the inference layer mapping concepts to labels and that many existing CBM architectures fail on both fronts.
As a second contribution, we discuss two simple but effective \textit{strategies that encourage CBMs to admit compact faithful explanations}, namely:
\textit{i}) modeling concept activations as probability distributions and not logits; and
\textit{ii}) applying structured, rather than unstructured, sparsity to the inference layer via group lasso.
As a third contribution, we propose \textit{replacing heuristics with algorithms from formal explainability} \citep{marques2023logic, darwiche2023logic}, which ensure faithfulness, and introduce \OurExactAlgorithm, an exact algorithm geared for compactness.
Our experiments in \cref{sec:experiments} highlight how our modeling strategies enable finding more compact faithful explanations for regular and VLM-based CBMs \citep{oikarinen2023label, yang2023language, debole2026if} and how \OurExactAlgorithm outperforms heuristic alternatives while remaining computationally affordable.

\textbf{Contributions.}  We:
\textit{i}) Study the faithfulness of CBM explanations, proving that, for many regular and VLM-based CBMs, faithful explanations cannot be compact (\cref{sec:faithfulness-vs-compactness});
\textit{ii}) From an algorithmic perspective, extend exact explainers from FXAI that ensure faithfulness to favor compactness (\cref{sec:compactness-faithfulness-algorithms});
\textit{iii}) From a modeling perspective, propose two simple but effective strategies that constrain CBMs to allow more compact faithful explanations (\cref{sec:solutions});
\textit{iv}) Present an extensive experimental evaluation showcasing the underlying issues and the benefits of our contributions (\cref{sec:experiments}).

\begin{figure}
    \centering
    \includegraphics[width=0.975\linewidth]{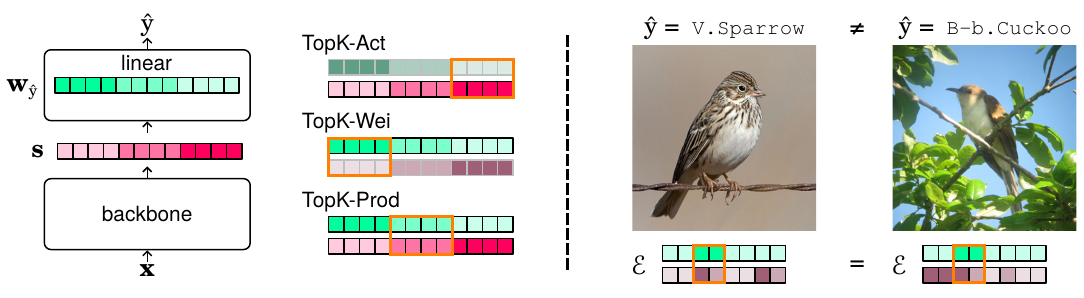}
    \caption{
        \textbf{Current heuristic explanation for CBMs can be unfaithful} as shown for a CBM combining a neural \textit{backbone} generating concept activations $\vAct \in \calAct$ and a linear \textit{inference layer} with per-class weights $\vw_y$, with $y \in [m]$. \textbf{Left:} Standard heuristics compute explanations $\calE$ by looking at the magnitude of activations, weights, or both, cf. \cref{eq:heuristics}.
        \textbf{Right:} A heuristic explanation $\calE$ that covers the $k = 2$ top weights: it justifies two decisions on the ground that the same two activations attain the same values.  Yet, the decisions differ:  the CBM must be looking at activations outside of $\calE$ to produce them.  Hence, $\calE$ is insufficient to fully justify either decision.
    }
    \label{fig:page-two}
\end{figure}

\section{Concept Bottleneck Models}
\label{sec:prelims}

\textbf{Notation.}  We denote scalar constants in lower-case ($x$), vector constants in bold ($\vx$), scalar and vector variables in upper-case ($X$, $\vX$), and sets in upper-case italics ($\calX$).
We abbreviate $\{1, \ldots, n\}$ to $[n]$ and, given a vector $\vx \in \bbR^n$, we write $\vx_\calI$ for its restriction to the indices in $\calI \subseteq [n]$.

\textbf{Concept bottleneck models.}  CBMs \citep{koh2020concept} are neural classifiers that combine a \textit{backbone} $g: \calX \to \calAct$ mapping inputs (\eg images) $\vx \in \calX$ to a bottleneck of $b$ concept \textit{activations} $\vAct \in \calAct$, and an \textit{inference layer} $f: \calAct \to \bbR^m$ mapping activations to $m$ per-class logits; see \cref{fig:page-two} (left).
The prediction $\hat y \in [m]$ is the highest-scoring class.
Concept activations $\vAct$ are traditionally modelled as \textit{independent logits} \citep{koh2020concept, shin2023closer, park2026analysis}, \ie $\calAct \equiv \bbR^b$, and only translated into probabilities for computing the loss.
Most commonly, the inference layer is a \textit{linear model} of the form $f(\vAct) = W \vAct + \vw_0$, where $W \in \bbR^{m \times b}$ and $\vw_0 \in \bbR^m$ are parameters.
CBMs define a loss for labels and one for concepts:
\[
    \label{eq:cbm-losses}
    \ell^Y(f, g) \; := \;
        \widehat\bbE_{(\vx_i, y_i)} \big[
            - \log p (y_i \mid \vx_i)
        \big],
    \quad
    \ell^C(f, g) \; := \;
        \widehat\bbE_{(\vx_i, \vc_i)} \big[
            - {\textstyle \frac{1}{b} \sum_{j \in [b]}} \log p(c_{ij} \mid \vx_i)
        \big].
\]
Here, $y_i \in [m]$ are task labels and $\vc_i \in \{0, 1\}^b$ concept labels; the distributions are $p(y \mid \vx) = \mathrm{softmax}((f \circ g)(\vx))_y$ and $p(c_j \mid \vx) = \mathrm{sigmoid}(g(\vx)_j)$; and the expectations are estimated on a densely annotated training set $\{ (\vx_i, \vc_i, y_i) \}$.  The concept loss aims to align concepts to human semantics, but see \cref{sec:related-work} for a discussion of failure cases.
The exact loss depends on whether the model is trained \textit{independently} (each module fit on corresponding labels), \textit{sequentially} (backbone fit on concept labels, inference layer on concept predictions), or \textit{jointly} (end-to-end with a convex combination of the two losses).

This backbone-plus-linear-classifier setup acts as a blueprint for most CBM variants -- see \citep{ji2025comprehensive, knab2026s} for overviews -- and for other concept-based architectures, like self-explainable \citep{alvarez2018towards} and prototype-based \citep{chen2019looks} neural nets.

\textbf{VLM-aided CBMs.}  Concept supervision is key for ensuring learned concepts are human-aligned \citep{bortolotti2025shortcuts, fokkema2026sample}, yet seldom available.
For datasets lacking it, ``VLM-CBMs'' bypass this issue by asking an LLM to generate a concept vocabulary (\ie textual descriptions of task-relevant concepts) and then obtaining weak concept supervision from a Vision-Language Model (VLM) like CLIP \citep{radford2021learning}, typically by measuring the similarity between concept descriptions and images in the VLM's embedding space \citep{yuksekgonul2023post, oikarinen2023label, yang2023language, tan2024explain, kazmierczak2026enhancing, debole2026if}.
These activations are used either to train a backbone or employed directly for inference.

\section{Compactness and Faithfulness}
\label{sec:compactness-faithfulness-algorithms}

By decomposing predictions into higher-level concepts, CBMs aim to facilitate mental simulation and reliability assessment \citep{rudin2019stop, narayanan2018humans, bogani2026are}, human intervention \citep{shin2023closer, espinosa2023learning}, and interactive debugging \citep{stammer2021right, bontempelli2022concept}.
Reliable interaction, however, requires CBMs to supply stakeholders with concept-level explanations that are both \textit{compact} -- so as to enable understanding -- and \textit{faithful} to their decision process -- so as not to mislead stakeholders.

\textbf{Compactness}.  Taking the \textit{whole} bottleneck to serve as explanation is rarely useful, as CBM vocabularies can comprise hundreds of entries \citep{lage2019human, barker2023selective, schrodi2025selective}, well beyond what stakeholders can grasp.
E.g., the default vocabulary of the \CUB bird classification task (\citet{wah2011caltech}, illustrated in \cref{fig:page-two}, right) comprises $312$ activations, and LLM-generated vocabularies for the same task count up to $10{,}000$ \citep{yang2023language}.

Practitioners often construct a smaller explanation by extracting a subset $\calE \subset [b]$ of concept activations.
Given a prediction $(\vAct, \hat{y})$ and a target size $k < b$, three well-known \textit{heuristic algorithms} for doing so (shown in \cref{fig:page-two}, middle) are picking the $k$ activations\footnote{Alternatively, one can fix a weight, activation, or score threshold $\tau \ge 0$. Our results extend to this case.} with the highest value \citep{chen2019looks, ley2026prototype}, the largest-magnitude weights \citep{yuksekgonul2023post, yang2023language}, or a combination thereof \citep{oikarinen2023label, srivastava2024vlgcbm}:
\[
    \label{eq:heuristics}
    \calE = \operatorname{argtop}_k \{ \Act_i \},
    \quad\quad
    \calE = \operatorname{argtop}_k \{ |w_{\hat y, i}| \},
    \quad\quad
    \calE = \operatorname{argtop}_k \{ |\Act_i \cdot w_{\hat y, i}| \}.
\]
One exception are Selective CBMs \citep{schrodi2025selective}, which construct $\calE$ via gating.
These heuristics are sometimes paired with \textit{training-time sparsification} penalties encouraging CBM predictions to rely only on few concepts.
We will detail these and Selective CBMs in \cref{sec:solutions}.

\textbf{Faithfulness}.  Another essential requirement is that $\calE$ covers all and only those concepts that determine the target prediction but not others \citep{marques2023logic, darwiche2023logic}.
To see why, consider the example in \cref{fig:page-two} (right): despite covering the $k = 2$ highest-weight activations (as per \cref{eq:heuristics}), $\calE$ supports predictions for two different classes, {\tt Vesper Sparrow} and {\tt Black-billed Cuckoo}. Since the same activations within $\calE$ are compatible with different predictions, the predicted class cannot be determined by $\calE$ alone, but it must depend on activations \textit{outside} $\calE$. In other words, $\calE$ is insufficient to fully justify either decision.
Unfaithful explanations $\calE$ can mislead users, compromising model assessment and reliable interaction.

As illustrated by the example above, the core requirement for a faithful explanation is that $\calE$ cannot simultaneously back up predictions for different classes \citep{shih2018symbolic, ignatiev2020towards}.
Surprisingly, this aspect has been off the radar in the CBM literature, and the go-to heuristics from \cref{eq:heuristics} have been implicitly assumed to be faithful. We show this is not the case. Following the FXAI literature, explanations satisfying this requirement are called \textit{abductive explanations}\footnote{Aka \textit{sufficient reasons} \citep{shih2018symbolic} and \textit{prime implicant explanations} \citep{ignatiev2019abduction}.} (\AXp{s}) \citep{marques2023logic}, which we extend to CBMs as follows:

\begin{definition}
\label{def:axps}
Given a CBM $(g, f)$, a prediction $(\vAct, \hat y)$, and a set of attainable concept activation vectors $\calAct$, then $\calE \subseteq [b]$ is said to be a \emph{weak \AXp} for $\hat{y}$ with respect to $\calAct$ if and only if
\[
    \textstyle
    \forall \vAct' \in \calAct
    \; . \;
    \big(
        \vAct_\calE' = \vAct_\calE
    \big)
    \; \implies \;
    \big(
        \hat{y} \in \argmax_{y \in [m]} f(\vAct')_y
    \big).
    \label{eq:weak-axp}
\] 
A weak \AXp that is also subset-minimal (s.t. no proper subset $\calE' \subset \calE$ is a weak \AXp), is an \AXp.
\end{definition}

The definition requires \textit{sufficiency} -- an adversary cannot construct any counterexample $\vAct' \in \calAct$ that matches $\vAct$ on all concepts in $\calE$ and yet entails a different class\footnote{If a CBM predicts two classes when the same subset of activations attain exactly the same values, these classes cannot be faithfully explained. This is unlikely to happen for discriminatively trained models.} -- and \textit{subset-minimality} -- all selected concepts should be needed for this to hold.
As shown in \cref{fig:page-two} (right) and by our experiments in \cref{sec:q3}, the heuristics in \cref{eq:heuristics} provide no such guarantees for commonly used values of $k$.

\textbf{Enforcing faithfulness.}  If guarantees are desired, a better option is to resort to \textit{exact algorithms} for computing \AXp{s}.
Concretely, we build on the simple but flexible Algorithm~1 of \citet{marques2023logic}, denoted \OriginalExactAlgorithm (Exact Algorithm).
Starting from the full bottleneck, $\calE = [b]$, it iteratively removes unneeded activations following an arbitrary order (\eg activation index), invoking an \textit{oracle} to check whether the candidate $\calE$ admits a counterexample $\vAct' \in \calAct$.
Doing so can be computationally expensive -- in fact, NP-hard -- in general \citep{shih2018symbolic, ignatiev2019abduction}.  However, the decision function of CBMs is linear and $\calAct$ is a conjunction of linear inequalities; therefore, the oracle boils down to a linear program that can be solved efficiently (see \cref{appx:finding-counterexample}).

We propose a variant of \OriginalExactAlgorithm biased toward more compact explanations.
Our algorithm, denoted \OurExactAlgorithm, grows a candidate $\calE = \emptyset$ into a weak \AXp by adding groups of activations following a heuristic-informed and shrinks the result until it is subset minimal.  This is akin to Algorithm~1 of \citet{soria2026formal}, except \OurExactAlgorithm \textit{i}) computes three \AXp{s} -- one for each heuristic in \cref{eq:heuristics} -- and picks the smallest, and \textit{ii}) employs binary search during the growth stage, and as such requires only a logarithmic number of oracle calls.
As shown in \cref{sec:extra-exp-algorithmic-comparison}, \OurExactAlgorithm's explanations are always \AXp{s}, whereas heuristics explanations often are not, while remaining computationally affordable in practice.
Pseudo-code for all algorithms is listed and discussed in \cref{sec:extra-algorithms}.

\section{The Trade-off Between Faithfulness and Compactness}
\label{sec:faithfulness-vs-compactness}

Our goal is to construct explanations $\calE$ that are both \textit{compact} and \textit{faithful}.  These properties, however, are at odds:  any (weak) \AXp must pin all activations that can be (even adversarially) manipulated to yield a different prediction.\footnote{E.g., a weak \AXp can always be obtained by taking $\calE = [b]$:  since this fixes the whole bottleneck, there exists no counterexample.  However, as we discussed, such an explanation is unusable if $b$ is large.}
Hence, the more leeway a CBM offers to construct counterexamples, the larger (weak) \AXp{s} have to be.
We proceed to make this argument exact.
The next result shows how two architectural properties -- (\textbf{P1}) the structure of the activation space $\calAct$, and (\textbf{P2}) the structure of the inference layer's weights $W$ -- determine whether a given $\calE \subset [b]$ can be a weak \AXp.  We begin from \textit{unbounded} activations $\calAct = (-\infty, \infty)^b$ and analyze the other cases to \cref{sec:proofs-appendix}.

\begin{proposition}[Informal]
\label{prop:label-flip-conditions-main}
\label{prop:no-apx-for-regular-cbms}
    Fix any CBM prediction $(\vAct, \hat y) \in \calAct \times [m]$ and explanation $\calE \subset [b]$, and let $\Deltaj := \vw_j - \vw_{\hat y} \in \bbR^b$ be the delta between the weight vectors of classes $j$ and $\hat y$.
    If (\textbf{P1}) $\calAct = (-\infty, \infty)^b$, and (\textbf{P2}) there exists an activation $i \in [b] \setminus \calE$ and a class $j \neq \hat y$ s.t. $\delta{}w^{(j)}_i \neq 0$, then $\calE$ is \textbf{not} a weak \AXp.
\end{proposition}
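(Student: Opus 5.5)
The plan is to exhibit an explicit counterexample $\vAct' \in \calAct$ that agrees with $\vAct$ on $\calE$ but for which $\hat y$ is not among the maximizers of $f(\vAct')$. By \cref{def:axps}, this shows that $\calE$ is not a weak \AXp. The linear form $f(\vAct) = W\vAct + \vw_0$ reduces the comparison between classes $j$ and $\hat y$ to the sign of a single affine function.

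First, by (\textbf{P2}) fix an index $i \in [b]\setminus\calE$ and a class $j \neq \hat y$ with $\delta w^{(j)}_i \neq 0$. Define the logit gap
\[
\logdiffj := f(\vAct)_j - f(\vAct)_{\hat y} = \langle \Deltaj, \vAct \rangle + (w_{0,j} - w_{0,\hat y}),
\]
which is affine in $\vAct$. Then set $\vAct' := \vAct + t\,\mathbf{e}_i$ for a scalar $t$ to be chosen. Since $i \notin \calE$, we have $\vAct'_\calE = \vAct_\calE$. By (\textbf{P1}), $\calAct = (-\infty,\infty)^b$, so $\vAct' \in \calAct$ for every $t \in \bbR$.

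Next, observe that $\delta\mu^{(j)}(\vAct') = \logdiffj + t\,\delta w^{(j)}_i$. Because $\delta w^{(j)}_i \neq 0$, choose $t$ with the same sign as $\delta w^{(j)}_i$ and magnitude larger than $|\logdiffj| / |\delta w^{(j)}_i|$; concretely, $t = \operatorname{sign}(\delta w^{(j)}_i)\,(|\logdiffj|+1)/|\delta w^{(j)}_i|$. This gives $\delta\mu^{(j)}(\vAct') \ge 1 > 0$, so $f(\vAct')_j > f(\vAct')_{\hat y}$. Hence $\hat y \notin \argmax_y f(\vAct')_y$, the implication in \cref{eq:weak-axp} fails for $\vAct'$, and $\calE$ is not a weak \AXp.

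There is no substantial obstacle here. The only points needing care are that the perturbation stays inside $\calAct$ and that it leaves $\calE$ untouched, and both are immediate from (\textbf{P1}) and from $i \notin \calE$. The strict inequality matters because the argmax may contain ties, which is why I pick $t$ to make the gap strictly positive. For the formal version in \cref{sec:proofs-appendix}, the same construction shows what changes for bounded activation spaces: there, $t$ is limited by the box, and the condition becomes that the attainable range of $\Act_i$ is wide enough to overcome $\logdiffj$.
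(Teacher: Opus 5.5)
Your proof is correct and takes essentially the same route as the paper's unbounded case of \cref{prop:label-flip-conditions}: you perturb a single free coordinate $i \notin \calE$ in the direction of $\operatorname{sign}(\delta w^{(j)}_i)$, which makes the logit gap between class $j$ and $\hat y$ positive. The only difference is cosmetic: the paper argues via the limit $s^*_i \to \pm\infty$, whereas you pick an explicit finite $t$ that makes the gap strictly positive, which also handles ties in the argmax directly.
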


The proof can be found in \cref{sec:proofs-appendix}.
Hence, if $\calAct$ is unbounded, fixing the prediction $\hat y$ requires $\calE$ to fix \textit{all} concept activations $j$ that link it to any alternative $y'$ via a non-zero weight difference $\delta{}w^{(j)}_i$.
Together with our earlier observation that $\calE = [b]$ is a (weak) \AXp, this entails that for regular CBMs, which (\textbf{P1}) model activations as raw logits and (\textbf{P2}) employ dense linear layers, \textit{no \AXp smaller than the entire bottleneck exists at all}.\footnote{In the binary case ($m = 2$), the top layer has a single output (the score of the positive class) and a single weight vector $\vw \in \bbR^b$.  This is equivalent to a top layer with two outputs and symmetric per-class weight vectors, that is, $\vw_0 = - \vw_1$.  Hence, the precondition $w_{\hat{y},j} - w_{y',j} \ne 0$ becomes $w_{\hat{y},j} \ne 0$: \cref{prop:no-apx-for-regular-cbms} triggers whenever an independent, unbounded activation $c_j$ not in $\calE$ is associated to a non-zero weight.}  This affects both heuristic and exact algorithms.

\cref{prop:no-apx-for-regular-cbms} suggests -- and \cref{prop:label-flip-conditions} confirms -- that \textbf{P1} and \textbf{P2} indeed control for the existence and size of (weak) \AXp{s}:  intuitively, larger activation spaces $\calAct$ leave more room for manipulating concept activations not fixed by $\calE$, while non-zero learned weights, especially if large, translate these into higher scores for non-predicted classes (see \cref{sec:extra-theory-discussion} for a detailed discussion).
Thus, appropriately constraining $\calAct$ and $W$ can reduce the adversary's ability to construct working counterexamples, meaning (weak) \AXp{s} do not have to cover as many activations to safeguard the prediction.

\subsection{Existing CBMs vs. \cref{prop:no-apx-for-regular-cbms}}
\label{sec:existing-cbms-suck}

Next, we discuss existing CBMs in light of \textbf{P1} and \textbf{P2}, summarizing our findings in \cref{tab:methods}.

\begin{table}[!t]
    \centering
    \caption{\textbf{Representative CBMs} grouped by domain of concept activations $\calAct$ (\textbf{P1}) and structure of the inference layer's weights $W$ (\textbf{P2}).  Constraining $\calAct$ and $W$ can make it more difficult to construct counterexamples that flip the prediction, allowing (weak) \AXp to be smaller. Few CBMs employ probabilistic activations and none group lasso.}
    \scalebox{0.9}{
    \begin{tabular}{cll}
        \toprule
        {\sc Act. $\calAct$}
            & {\sc Wei. $W$}
            & {\sc Representative Models (Not Exhaustive)}
        \\
        \midrule
        \rowcolor{tablegray}
        $(-\infty, \infty)^b$
            & Dense
            & CBM~\smallcitep{koh2020concept}, ProbCBM~\smallcitep{kim2023probabilistic}, MCBM~\smallcitep{almudevar2026there}
        \\
        $(-\infty, \infty)^b$
            & Elastic Net
            & LF-CBM~\smallcitep{oikarinen2023label}, VLG-CBM~\smallcitep{srivastava2024vlgcbm}
        \\
        \rowcolor{tablegray}
        $[\alpha, \infty)^b, \alpha \ge 0$
            & Elastic Net
            & Selective CBM \smallcitep{schrodi2025selective}
        \\
        $[-1, 1]^b$
            & Dense
            & LaBo \smallcitep{yang2023language}
        \\
        \rowcolor{tablegray}
        $\subseteq [0, 1]^b$
            & Dense
            & CB2M \smallcitep{steinmann2024learning}, GlanceNets \smallcitep{marconato2022glancenets}
        \\
        \bottomrule
    \end{tabular}
    }
    \label{tab:methods}
\end{table}

\textbf{Concept activation space.}  Standard CBMs \citep{koh2020concept, shin2023closer, park2026analysis} and extensions \citep{sawada2022concept, wu2022learning, huang2024concept, almudevar2026there} employ independent, unbounded activations, and thus trigger \cref{prop:no-apx-for-regular-cbms}.  Despite the name, probabilistic CBMs do the same \citep{kim2023probabilistic}.
VLM-CBMs with a backbone \citep{oikarinen2023label, srivastava2024vlgcbm} do supervise it with bounded concept annotations, yet feed the inference layer unbounded activations.
Post-hoc CBMs \citep{yuksekgonul2023post} lack a backbone, but they employ unbounded dot products as activations.  All these models leave the adversary unconstrained.
Selective CBMs \citep{schrodi2025selective} pass activations through an input-dependent gating function $\max(0, \vAct - \vo)$ with a learnable bias $\vo$.\footnote{The JumpReLU/TopK-based variants \citep{schrodi2025selective} behave analogously.}  While this prevents \cref{prop:no-apx-for-regular-cbms} from applying, it allows pushing activations not in $\calE$ to $+\infty$, which can flip the label whenever these are associated to positive (rather than non-zero) weight differences $\delta{}w^{(j)}_i$ (see \cref{prop:label-flip-conditions} for a formal statement); as we show in \cref{sec:extra-cbm-vs-selective-cbm}, this inflates faithful explanations in practice.
LaBo \citep{yang2023language} fares best in that its activations are cosine similarities living in $\calAct = [-1, 1]^b$, yet we show in \cref{sec:q2} this does not guarantee (weak) \AXp{s} are compact unless the weights of the inference layer are also appropriately constrained.

\textbf{Inference layer weights.}  Standard CBMs do not regularize $W$ and as such do not encourage learning non-zero differences $\Deltaj$.
Recent alternatives -- like Selective CBMs \citep{schrodi2025selective} and most VLM-CBMs \citep{yuksekgonul2023post, oikarinen2023label, srivastava2024vlgcbm} -- sparsify $W$ through an \textit{elastic net} penalty \citep{zou2005regularization}, thus encouraging predictions to depend on few concepts.  This penalty takes the form $\alpha\norm{W}_1 + (1-\alpha) \norm{W}_2^2/2$, where $\alpha$ mediates between the $L_1$ and $L_2$ terms.
While the elastic net encourages \textit{unstructured} sparsity, it still allows learning \textit{pairs} of entries satisfying $\Deltaj \neq \mathbf{0}$, and, as we show in \cref{sec:q2}, it struggles to produce compact faithful explanations for VLM-CBMs.

\subsection{Beyond \cref{prop:no-apx-for-regular-cbms} with Probabilistic Concepts and Group Lasso}
\label{sec:solutions}

\textbf{Probabilistic modeling of concept activations}.  We propose to constrain $\calAct$ by \textit{modeling concepts probabilistically}, \ie by passing activations of binary (resp. categorical) concepts through sigmoid (resp. softmax) activations.
Besides ensuring activations are bounded to $[0, 1]^b$, doing so allows exploiting mutual exclusivity of categorical concepts.
To see why this matters, imagine encoding a categorical ``color'' concept with three values -- red, green and blue -- as three softmax outputs $\Act_{\tt r}$, $\Act_{\tt g}$, $\Act_{\tt b}$.  Critically, $\Act_{\tt r} + \Act_{\tt g} + \Act_{\tt b} = 1$.  Suppose red is predicted with high probability, say, $\Act_{\tt r} = 0.95$.  If $\calE$ covers $s_{\tt r}$, the adversary is left little room to construct a counterexample $\vAct'$:  at most, it can shift the residual $0.05$ mass between $\Act_{\tt g}$ and $\Act_{\tt b}$, which is unlikely to flip the prediction.
Formally, this shrinks $\calAct$ from $[0, 1]^b$ to a product of per-concept simplices:
the two only match if all concepts are binary.

\textbf{Structured weight sparsity}.  We suggest to constrain the weights $W$ via \textit{group lasso} \citep{yuan2006model}, treating each activation as a separate group.
This penalty takes the form $\sum_i \norm{\vw_i}_2$, where $i$ iterates over activations, so as to actively discourage each of them from contributing to any class.  In turn, this encourages all differences $w_{y',j'} - w_{\hat{y},j'}$ to be exactly zero.

\textbf{Benefits \& Limitations}.  As we will show empirically in \cref{sec:experiments}, both proposed strategies are effective at shrinking the size of (weak) \AXp{s}.
Probabilistic modeling of concepts is not new (see \eg \citep{marconato2022glancenets, steinmann2024learning, tan2024sparsity}, but it has never been linked to the compactness of faithful explanations.
In contrast, despite being a widely known technique, no existing CBM architecture employs group lasso.
As we will see, probabilistic modeling does not come at any accuracy cost, while excessive sparsification \textit{can} lower downstream accuracy (see \cref{sec:q2,sec:q3}) and as such has to be applied with care.
We stress, however, that even our strategies, while helpful, cannot ensure large bottlenecks always admit \textit{small} \AXp{s}.  We will briefly outline other potentially promising remedies in \cref{sec:conclusion}.

\section{Experiments}
\label{sec:experiments}

We address the following research questions:
\textbf{Q1}: Does modeling concepts probabilistically enable more compact \AXp{s}?
\textbf{Q2}: Does group lasso shrink \AXp{s} further?
\textbf{Q3}: How does \OurExactAlgorithm fare compared to the heuristic algorithms in \cref{eq:heuristics}?
Due to space constraints, we leave some experimental details to \cref{sec:expermental-details} and additional results to \cref{sec:extra-experiments}.

\begin{wraptable}[7]{r}{0.575\textwidth}
    \centering
    \vspace{-2em}
    \caption{\textbf{Datasets.} {\sc Cat.} is the \# of categorical concepts.}
    \label{tab:dataset-stats}
    \scalebox{0.8}{
    \begin{tabular}{lrrrrrr}
        \toprule
        {\sc Dataset} & $m$ & $b$ & {\sc Cat.} & {\sc Train} & {\sc Val} & {\sc Test} \\
        \midrule
        \CelebA         &   2 &  39 &  0 & 162{,}770 & 19{,}867 & 19{,}962 \\
        \Derm{}         &   5 &  28 &  7 &       413 &      203 &     395 \\
        \WBC{}          &   5 &  31 & 11 &   6{,}169 &  1{,}030 &  3{,}099 \\
        \CUB            & 200 & 312 & 28 &   5{,}394 &      600 &  5{,}794 \\
        \bottomrule
    \end{tabular}
    }
\end{wraptable}
\textbf{Datasets.}  We evaluate four CBM datasets spanning different number of classes $m$, concept activations $b$, and ratio of binary vs. categorical concepts, cf. \cref{tab:dataset-stats}.
\underline{\CelebA} \citep{liu2015deep} comprises $203$K face images annotated with $40$ binary concepts, split into $163$K training / $20$K validation / $20$K test images. The goal is to predict {\tt sex} from the remaining $39$ concepts.
{\tt \underline{Derm7}p\underline{t}} \citep{kawahara2018seven} includes $1$K dermoscopic skin-lesion cases annotated with diagnostic criteria, one-hot encoded into $28$ activations, and a $5$-class diagnosis label obtained by grouping $20$ raw diagnoses.
\underline{\WBC} \citep{tsutsui2023wbcatt} annotates $10$K white blood cell microscopy images with $11$ categorical morphological attributes, one-hot encoded into $31$ activations, and a $5$-class cell-type label.
\underline{\CUB} \citep{wah2011caltech} involves classifying bird images into one of $200$ classes using $28$ categorical concepts (multi-hot encoded into $312$ activations).
It comprises $5$K training and $5$K test images;  we carve about $10\%$ of the training set for validation.

\textbf{Models.}  For all models, the backbone is a frozen pre-trained image encoder -- OpenCLIP's {\tt ViT-L-14-quickgelu} \citep{ilharco2021openclip} -- followed by a linear concept probe.
The inference layer is a logistic regressor, either dense or sparsified via elastic net (\ENet) or group lasso (\GLasso).
For regular CBMs, we take the concept vocabularies and annotations from the dataset and evaluate logit ($\calAct = (-\infty, +\infty)^b$), ReLU ($\calAct = [0, +\infty)^b$), and probabilistic ($\calAct \subseteq [0, 1]^b$) concept activations.  For the latter, we employ softmax activations for categorical concepts and sigmoids for binary ones, as per \cref{sec:solutions}.
We resort to sequential training, as it achieves competitive performance while reducing the amount of label-specific information affecting the concepts.
For VLM-CBMs, we employ LLM-generated vocabularies from \LaBo \citep{yang2023language}, \LFCBM \citep{oikarinen2023label}, and \VLGCBM \citep{srivastava2024vlgcbm} and compute concept activations as cosine similarities in CLIP's embedding space, as customary \citep{debole2026if}.  Following \cref{sec:existing-cbms-suck}, we ensure $\calAct = [-1, 1]^b$ by employing the obtained activations for inference, like \LaBo.\footnote{The original \LFCBM and \VLGCBM employ a different VLM and a separate backbone; this explains the accuracy drop in our results compared to the original papers. This difference is irrelevant for our purposes.}
Due to space constraints, we defer our analysis of Selective CBMs to \cref{sec:extra-cbm-vs-selective-cbm}.

\textbf{Metrics.}  For each model, we report the test set label and concept $F_1$ score (denoted $F_1^Y$ and $F_1^C$).  The latter is micro-averaged over concept activations when these are independent (\ie logits and ReLUs) and over concepts otherwise (\ie for probabilities).
For each algorithm, we explain a random subset of $1$K test instances; for \Derm we explain all $395$ test points; for the \LaBo\ \CUB vocabulary, which has $10$K entries, we explain $100$ instances.
For each model, \textit{we test faithfulness following \cref{def:axps} using the model's declared activation space $\calAct$}; as we discuss in \cref{sec:extra-activation-spaces}, this guarantees our weak \AXp{s} are robust to all possible counterexamples, regardless of whether are produced by the backbone or by (potentially malicious) human interventions.

\subsection{Q1: Probabilities Shrink Faithful Explanations At No Accuracy Cost}
\label{sec:q1}

We compute \textit{faithfulness-compactness curves} (FCCs) plotting the cumulative percentage of explained instances admitting a (weak) \AXp within a specified size budget $k$:  the faster they grow, the more instances admit a smaller faithful explanation.
In \cref{fig:results-logits-vs-proba}, we report FCCs for dense CBMs:  logit activations above and probabilities below.  Colors represent different algorithms: the {\tt TopK} heuristics in \cref{eq:heuristics} ({\tt Act} in \textcolor{teal}{\textbf{green}}, {\tt Wei} in \textcolor{orange}{\textbf{orange}}, {\tt Prod} in \textcolor{violet}{\textbf{violet}}) and \OurExactAlgorithm (\textcolor{magenta}{\textbf{pink}}).

\begin{wraptable}[9]{r}{0.425\textwidth}
    \centering
    \vspace{-2em}
    \caption{\textbf{Probabilities are competitive with logits} in terms of $F_1$ score. }
    \scalebox{0.8}{
    \begin{tabular}{lrrrr}
        \toprule
        & \multicolumn{2}{c}{\sc $F_1^C$} & \multicolumn{2}{c}{\sc $F_1^Y$} \\
        \cmidrule(lr){2-3} \cmidrule(lr){4-5}
        {\sc Dataset} & {\sc prob} & {\sc logit} & {\sc prob} & {\sc logit} \\
        \midrule
        \CelebA     & $0.760$ & $0.760$ & $0.989$ & $0.991$ \\
        \Derm       & $0.676$ & $0.670$ & $0.653$ & $0.666$ \\
        \WBC        & $0.877$ & $0.872$ & $0.943$ & $0.948$ \\
        \CUB        & $0.796$ & $0.781$ & $0.817$ & $0.832$ \\
        \bottomrule
    \end{tabular}
    }
    \label{tab:f1-logits-vs-proba}
\end{wraptable}
\textbf{Logits are pathological}.  For all algorithms, the curves for logit activations are squashed to zero until the explanation size budget reaches the bottleneck size $b$.
This empirically confirms \cref{prop:no-apx-for-regular-cbms}:  logits leave the adversary unconstrained, meaning that -- since the inference layer is dense -- no (weak) \AXp smaller than the entire bottleneck exists at all.

\begin{figure}[!t]
    \centering
    \begin{tabular}{c}
         \includegraphics[width=0.315\linewidth]{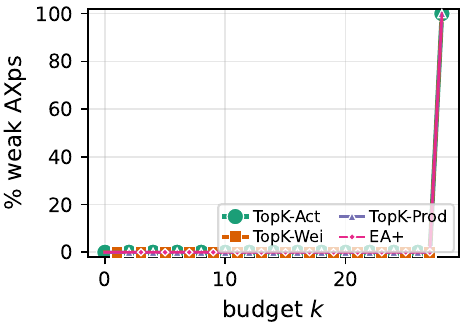}
         \includegraphics[width=0.315\linewidth]{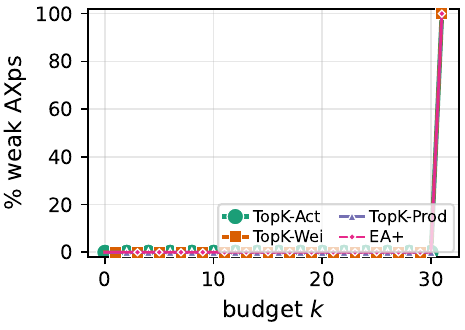}
         \includegraphics[width=0.315\linewidth]{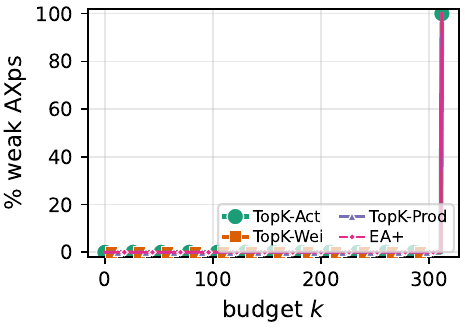}
         \\
         \includegraphics[width=0.315\linewidth]{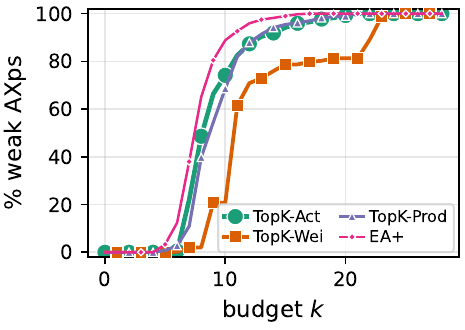}
         \includegraphics[width=0.315\linewidth]{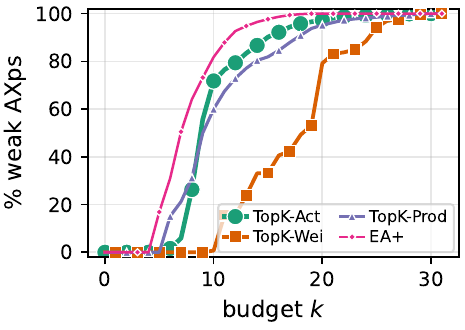}
         \includegraphics[width=0.315\linewidth]{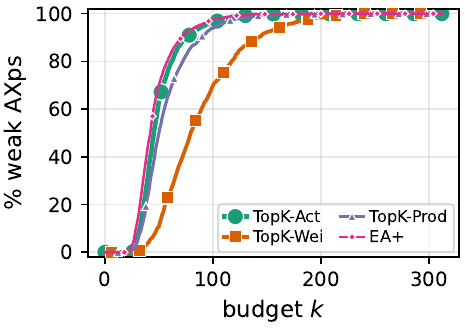}
    \end{tabular}
    \caption{\textbf{Modeling concepts as probabilities shrinks \AXp{s}} on \Derm (left), \WBC (middle), and \CUB (right). Top row: logits. Bottom row: probabilities.  \CelebA is left to \cref{sec:extra-exp-logits-vs-relus-vs-proba}.
    }
    \label{fig:results-logits-vs-proba}
\end{figure}

\textbf{Probabilities shrink faithful explanations of all algorithms without hindering accuracy}.  For probabilistic activations, the FCCs grow much quicker, meaning several explained instances admit smaller (weak) \AXp{s}.
Focusing on \OurExactAlgorithm, the FCC shows that about $80\%$ instances ($Y$-axis) admit a faithful explanation of size $\le 9$ in \Derm (down from $28$ for logits), $\le 10$ in \WBC ($31$ for logits), and $\le 60$ in \CUB ($312$ for logits).  The results for \CelebA in \cref{sec:extra-exp-logits-vs-relus-vs-proba} are analogous.
Comparing the FCCs of the heuristics in the top and bottom rows shows they also require a smaller budget $k$ to find weak \AXp{s}: \TopKAct and \TopKProd fare best, \ie explain compactly and faithfully almost as many instances as \OurExactAlgorithm, while \TopKWei lags behind.
Overall, these results confirm that modeling concepts probabilistically is effective at preventing counterexamples.
Crucially, these benefits come at no cost as CBMs with logit and probabilistic activations perform equally well:  \cref{tab:f1-logits-vs-proba} shows that they attain almost identical $F_1^Y$ and $F_1^C$, with the worst-case gap being $2$ percentage points (pp).

\subsection{Q2: Group Lasso Outperforms Elastic Net}
\label{sec:q2}

Next, we compare \ENet and \GLasso on CBMs with probabilistic activations and VLM-CBMs with cosine similarity activations.
We focus here on the average size of the (weak) \AXp{s} found by the various algorithms,
and report the FCCs for all models and datasets in \cref{sec:extra-exp-sparsity-on-proba,sec:extra-exp-sparsity-on-vlm-cbms}.
Note that cranking up sparsity strength $\lambda$ helps finding faithful explanations (per \PTwo) but also sinks performance.  To ensure a fair comparison, for each sparse CBM we maximize $\lambda$ under the constraint that its $F_1^Y$ is at most $10\%$ worse than the dense CBM's on the validation split, as shown in \cref{tab:f1-proba-dense-vs-enet-vs-glasso}.
We fix \ENet's $\alpha$ to $0.99$, \citet{yuksekgonul2023post} and \citet{oikarinen2023label}'s default.

\begin{wraptable}[13]{r}{0.425\textwidth}
    \centering
    \vspace{-1em}
    \caption{\textbf{\GLasso shrinks faithful explanations at less than $10\%$ $F_1^Y$ cost}.  VLM-CBMs have no $F_1^C$ as they have no ground-truth annotations.}
    \scalebox{0.8}{
    \begin{tabular}{lrrrr}
        \toprule
        & {\sc $F_1^C$} & \multicolumn{3}{c}{\sc $F_1^Y$} \\
        \cmidrule(lr){3-5}
        {\sc Dataset} &  & {\sc dense} & \ENet & \GLasso \\
        \midrule
        \CUB        & $0.796$ & $0.817$ & $0.759$ & $0.756$ \\
        \Derm       & $0.676$ & $0.653$ & $0.648$ & $0.638$ \\
        \WBC        & $0.877$ & $0.943$ & $0.916$ & $0.927$ \\
        \midrule
        \LFCBM      & --      & $0.545$ & $0.589$ & $0.564$ \\
        \VLGCBM     & --      & $0.626$ & $0.632$ & $0.576$ \\
        \LaBo       & --      & $0.835$ & $0.753$ & $0.766$ \\
        \bottomrule
    \end{tabular}
    }
    \label{tab:f1-proba-dense-vs-enet-vs-glasso}
\end{wraptable}
\textbf{\ENet only partially works}.  For CBMs (\cref{tab:q2}, top),
in most cases \ENet shrinks explanation sizes, but in three cases (in \textcolor{red}{red}) it actively regresses:  barely on \Derm ($+1.8$pp for \TopKProd) and substantially on \CUB ($+58.4$pp for \TopKWei and $+68.5$pp for \TopKProd).
This is because at $\alpha = 0.99$, which favors $L_1$ over $L_2$, it tends to learn larger non-zero weights, which allow the adversary to flip the prediction with smaller activations deltas (as larger weights help convert these into comparatively larger changes to the class scores).
\ENet also fares poorly with VLM-CBMs, which have less constrained concept activations: it often fails to shrink explanations (see the \textcolor{red}{red} entries), except for \OurExactAlgorithm (\LFCBM: $-16.6$pp; \VLGCBM: $-30.7$pp; \LaBo: $-52.7$pp).

\textbf{\GLasso performs better and is key for VLM-CBMs.}  \GLasso always yields smaller faithful explanations compared to the dense baseline.  E.g., for \OurExactAlgorithm, the improvement amounts to \Derm: $-22.3$pp; \WBC: $-18.3$pp; \CUB: $-7.7$pp.
It only loses to \ENet by a small margin in two cases: \WBC/\OurExactAlgorithm (explanation size $7.3\%$ vs $8.1\%$) and \CUB/\TopKAct ($13.1\%$ vs $14.3\%$).
Moreover, \cref{tab:q2} (bottom) shows that \GLasso is essential for VLM-CBMs, where it substantially outperforms \ENet (except for \TopKAct, which misbehaves in all cases): with \GLasso, \OurExactAlgorithm explanations take $17.5\%$ of \LFCBM's bottleneck (whereas \ENet requires $82.9\%$ of it), $15.9\%$ of \VLGCBM's (versus $68.7\%$), and only $6.5\%$ of \LaBo's (versus $46.9\%$).

\begin{table}[!t]
    \centering
    \caption{\textbf{\GLasso consistently yields smaller faithful explanations than \ENet}.  Explanation size is reported as a percentage (\ie normalized by bottleneck size $b$) for concision.  \textbf{Bold} indicates best in column and \textcolor{red}{red} no shrinkage w.r.t. the dense baseline.}
    \label{tab:q2}
    \scalebox{0.8}{
    \begin{tabular}{l cccc cccc cccc}
        \toprule
                & \multicolumn{4}{c}{\Derm ($b = 28$)}
                & \multicolumn{4}{c}{\WBC ($b = 31$)}
                & \multicolumn{4}{c}{\CUB ($b = 312$)} \\
        \cmidrule(lr){2-5} \cmidrule(lr){6-9} \cmidrule(lr){10-13}
                & {\sc Act} & {\sc Wei} & {\sc Prod} & \OurExactAlgorithm
                & {\sc Act} & {\sc Wei} & {\sc Prod} & \OurExactAlgorithm
                & {\sc Act} & {\sc Wei} & {\sc Prod} & \OurExactAlgorithm \\
        \midrule
        dense   
                & $34.1$ & $47.3$ & $35.1$ & $29.6$
                & $34.0$ & $57.6$ & $35.6$ & $26.3$
                & $16.6$ & $28.6$ & $18.5$ & $15.4$ \\
        \ENet   
                & $27.7$ & $36.9$ & $\textcolor{red}{36.9}$ & $9.5$
                & $29.9$ & $31.0$ & $30.9$ & $\textbf{7.3}$
                & $\textbf{13.1}$ & $\textcolor{red}{87.0}$ & $\textcolor{red}{87.0}$ & $10.2$ \\
        \GLasso 
                & $\textbf{27.4}$ & $\textbf{8.1}$  & $\textbf{7.8}$  & $\textbf{7.3}$
                & $\textbf{28.1}$ & $\textbf{11.7}$ & $\textbf{13.6}$ & $8.1$
                & $14.3$ & $\textbf{9.8}$  & $\textbf{9.4}$  & $\textbf{7.7}$ \\
        \midrule
        \midrule
                & \multicolumn{4}{c}{\LFCBM ($b = 202$)}
                & \multicolumn{4}{c}{\VLGCBM ($b = 419$)}
                & \multicolumn{4}{c}{\LaBo ($b = 10{,}000$)} \\
                & {\sc Act} & {\sc Wei} & {\sc Prod} & \OurExactAlgorithm
                & {\sc Act} & {\sc Wei} & {\sc Prod} & \OurExactAlgorithm
                & {\sc Act} & {\sc Wei} & {\sc Prod} & \OurExactAlgorithm \\
        \midrule
        dense   
                & $100.0$ & $99.9$ & $99.9$ & $99.5$
                & $99.9$  & $99.8$ & $99.8$ & $99.4$
                & $99.8$  & $99.6$ & $99.6$ & $99.6$ \\
        \ENet   
                & $\textcolor{red}{100.0}$ & $\textcolor{red}{99.9}$ & $\textcolor{red}{99.9}$ & $82.9$
                & $\textcolor{red}{100.0}$ & $\textcolor{red}{99.9}$ & $\textcolor{red}{99.9}$ & $68.7$
                & $\textcolor{red}{99.9}$  & $99.1$ & $99.1$ & $46.9$ \\
        \GLasso 
                & $\textcolor{red}{99.7}$  & $\textbf{19.3}$ & $\textbf{19.4}$ & $\textbf{17.5}$
                & $\textcolor{red}{99.8}$  & $\textbf{17.1}$ & $\textbf{17.1}$ & $\textbf{15.9}$
                & $\textcolor{red}{99.9}$  & $\textbf{6.6}$  & $\textbf{6.7}$  & $\textbf{6.5}$ \\
        \bottomrule
    \end{tabular}
    }
\end{table}

\subsection{Q3: \OurExactAlgorithm finds smaller faithful explanations}
\label{sec:q3}

Finally, we assess how the various algorithms fare in recovering compact (weak) \AXp{s}, focusing on properly constrained CBMs to enable meaningful comparison.
Additionally, we report an in-depth analysis of runtimes, number of oracle calls, and subset-minimality in \cref{sec:extra-exp-algorithmic-comparison}.

\textbf{\OurExactAlgorithm finds \AXp{s} at a lower budget $k$}.  While \OurExactAlgorithm finds $k$ automatically, the {\tt TopK} heuristics take it as input:  if too small, they \textit{cannot} find any weak \AXp. 
This is what the FCCs in \cref{fig:results-logits-vs-proba} (bottom row) show for CBMs with probabilistic concepts:\textit{  setting $k$ too low yields no weak \AXp at all}.
Moreover, compared to \OurExactAlgorithm, heuristics need a larger $k$ to find a weak \AXp.  This is expected: \OurExactAlgorithm returns the smallest of three \AXp{s}, each computed by refining a weak \AXp obtained by ranking activations using weights, activations, or their product, cf. \cref{sec:extra-algorithms}.
Among heuristics, \TopKAct and \TopKProd work best.  This is due to modeling categorical concepts as probabilities:  as discussed in \cref{sec:solutions}, they pin the largest activations first, constraining the other activations via mutual exclusivity in turn.
This advantage disappears when all concepts are binary, as shown on \CelebA in \cref{sec:extra-exp-logits-vs-relus-vs-proba} and by VLM-CBMs.
\OurExactAlgorithm also yields smaller weak \AXp{s} for VLM-CBMs, as shown in \cref{tab:q2}.
The gap is substantial for \ENet-sparsified models:  the best heuristic yields explanations with average size $> 99\%$ of the bottleneck across vocabularies, while \OurExactAlgorithm attains $82.9\%$ (\LFCBM), $68.7\%$ (\VLGCBM) and $46.9\%$ (\LaBo).
On \GLasso, \OurExactAlgorithm still wins but with a smaller margin, attaining $17.5\%$, $15.9\%$ and $6.5\%$ versus $19.3\%$, $17.1\%$ and $6.6\%$ for the runner-up.

\section{Related Work}
\label{sec:related-work}

\textbf{Formal XAI} sets itself apart from classic XAI \citep{guidotti2018survey} by seeking explanations with formal guarantees \citep{marques2023logic, darwiche2023logic}.
Most works focus on discrete features, and as such do not apply to CBMs with continuous concepts, although a handful of CBM variants do employ discrete concepts, \eg \citep{havasi2022addressing, vandenhirtz2024stochastic, knab2026s}; we plan to investigate their faithfulness in future work.
We build on the well-known \OriginalExactAlgorithm algorithm due to its flexibility and good empirical performance.
More refined algorithms for computing \textit{size-minimal} \AXp{s} of linear classifiers with polynomial delay \citep{marques2020explaining, cooper2023tractability, fernandes2026concisely} are restricted to binary tasks and cannot replace \OurExactAlgorithm in our experiments.
Our theoretical contribution is novel.  Compatibly with it, the algorithms of \citet{marques2020explaining} require continuous features to be bounded.  Our work, however, studies not only existence of \AXp{s} but also modeling choices (beyond boundedness) that control their \textit{size}.

Three papers lie closest to our work.
\citet{srivastava2024vlgcbm} show that top-$5$ heuristic explanations of VLM-CBM decisions often allow flipping predictions;  our results systematize their observation.
\citet{soria2026formal} introduce FXAI algorithms for computing \AXp{s} of prototype neural nets, where concept activations encode similarities to learned prototypical embeddings.  Our theory backs up their findings but also allows us to identify architectural choices that facilitate the existence of more compact (weak) \AXp{s}.  Their Algorithm~1 is close to \OurExactAlgorithm except it ranks activations by absolute value only, which performs poorly for independent activations (see \CelebA in \cref{sec:extra-exp-logits-vs-relus-vs-proba}) and for VLM-CBMs (see \cref{sec:q3}).
Finally, \citet{paul2024formal} design an FXAI algorithm for neuro-symbolic models that, similarly to CBMs, pair concept extraction with (logic) inference \citep{marconato2026symbol}.  Whereas they seek hierarchical \AXp{s} (covering both inputs and concepts) robust to local ($\varepsilon$-ball) perturbations, we construct concept-level \AXp{s} robust to all counterexamples in $\calAct$.

\textbf{Transparency of CBMs.}  Most critiques of CBMs study human-alignment of the learned concepts.
Some works show that, without concept supervision, CBMs can learn arbitrarily misaligned concepts \citep{bortolotti2025shortcuts}; others look at the impact of replacing expert supervision with VLM annotations
\citep{kazmierczak2026enhancing, debole2026if}; others still show that, even when concepts are supervised, CBMs can push accuracy by encoding spurious information into the concepts \citep{mahinpei2021promises, margeloiu2021concept, raman2023concept} and propose several remedies \citep{havasi2022addressing, marconato2022glancenets, huang2024concept, fokkema2026sample, almudevar2026there} (but see \citet{zarlenga2026defense} for counterarguments).
While concept alignment is sometimes referred to as ``concept faithfulness'' \citep{schmalwasser2026faithfulness}, none of these works consider the faithfulness of CBM \textit{explanations}.

\section{Conclusion}
\label{sec:conclusion}

We study the faithfulness of CBM explanations and adapt ideas and techniques from FXAI to these models.
Our work shows that many existing CBMs do not admit compact faithful explanations;
that probabilistic modeling and structured sparsity enable retrieving substantially more compact (weak) \AXp{s} for regular and VLM-based CBMs;
and that, while existing heuristic strategies for computing compact CBM explanations often fail to identify (even weak) \AXp{s}, our exact algorithm does.
At a high level, our work suggests that (large) concept bottlenecks should not be na\"ively equated with faithful interpretability.
In future work, we will study how to shrink faithful explanations further.
A first direction involves squeezing \citep{schrodi2025selective, yan2023learning} and enlarging \citep{feng2026bayesian} bottlenecks \textit{on demand} while ensuring faithfulness.
A second direction involves relaxing the notion of faithfulness, \eg by adapting \textit{probabilistic} variants \citep{wang2021probabilistic, azzolin2025reconsidering, subercaseaux2025probabilistic} that require the prediction holds with high probability.
Finally, while we require explanations to be robust to both natural and (possibly out-of-distribution) human interventions, we plan to study how to shrink \AXp{s} by considering only counterexamples achievable by the backbone, leveraging approximations from the neural net verification literature \citep{singh2019abstract, zhang2018efficient} to keep \OurExactAlgorithm tractable.  As a side benefit, this would automatically account for dependencies between input variables, which FXAI algorithms (including ours) typically ignore.

\clearpage
\newpage

\subsection*{AI use statement}

We used generative AI tools for writing a first version of the code, for support when deriving the proofs in \cref{sec:proofs-appendix}, for scanning the literature in search of relevant publications we originally missed, and for reviewing the manuscript before submission.  We have not used generative AI tools for other tasks with required disclosure.
The code was manually reviewed by the authors.  The proofs were manually written and carefully checked by the authors.  The figures were made by the authors.  The manuscript was written by the authors.  The bibliography was manually compiled by the authors using bibentries sourced from Google Scholar.
We take responsibility for the final content of this work, including text, claims or artifacts produced with the aid of generative AI.

\subsubsection*{Author Contributions}

The core ideas were born from a chat between AV and ST while the latter was visiting the University of Edinburgh in 2024.  The idea evolved over time through several discussions with SA.  In the summer of 2026, ST made a push toward translating ideas into a paper, designing the experiments and a basic theoretical analysis, and leveraging generative AI to write the necessary code.  AV, SA and EM provided invaluable ideas and feedback throughout the process.  Meanwhile, EM developed and integrated a more in-depth theoretical analysis.  All authors contributed to the writing of the paper and approved the current manuscript.

\subsection*{Reproducibility statement}

Experimental details are provided in \cref{sec:experiments} and \cref{sec:extra-experiments}.
%

\subsubsection*{Acknowledgments}

The authors are grateful to Dario Pesenti for useful discussion.
Funded by the European Union. Views and opinions expressed are however those of the author(s) only and do not necessarily reflect those of the European Union, the MSCA-PF programme, or the European Health and Digital Executive Agency (HaDEA). Neither the European Union nor the granting authority can be held responsible for them. Grant Agreement no. 101120763 - TANGO.
EM is funded by the Horizon Europe/Marie Sklodowska-Curie under Grant Agreement No. 101282115 - AI-CURE.
Antonio Vergari is supported by the ``UNREAL: Unified Reasoning Layer for Trustworthy ML'' project (EP/Y023838/1) selected by the ERC and funded by UKRI EPSRC.

\bibliography{references}
\bibliographystyle{unsrtnat}

\clearpage
\newpage
\appendix

\clearpage
\section{Algorithmic Details}
\label{sec:extra-algorithms}

The algorithms we evaluate are the following:
\textit{i}) \TopKAct, \TopKWei, and \TopKProd, three heuristic strategies that require a target explanation size $k$ as input;
\textit{ii}) \OriginalExactAlgorithm, a general exact algorithm, appearing as Algorithm~1 in \citet{marques2023logic}, which does \textit{not} require $k$ as input; and
\textit{iii}) \OurExactAlgorithm, our improved \OriginalExactAlgorithm.
All pseudo-code assumes access to a CBM $(f, g)$ with a linear inference layer and to a prediction $(\vAct, \hat y)$ to be explained;  we leave these implicit for brevity.

\subsection{Finding a Counterexample}
\label{appx:finding-counterexample}

Most algorithms rely on a procedure $\textsc{Oracle}(\calE, \calAct)$ for checking whether a given explanation $\calE$ admits constructing a counter-example $\vAct' \in \calAct$.
Given $(\vAct, \hat y)$, a vector $\vAct'$ is a valid counterexample if it matches $\vAct$ on all activations covered by $\calE$ but yields a different prediction $y' \ne \hat y$, that is:
\[
    \textstyle
    \big(
        \vAct_\calE' = \vAct_\calE
    \big)
    \; \land \;
    \big(
        \hat{y} \not\in \argmax_{y \in [m]} f(\vAct')_y
    \big).
    \label{eq:whatever}
\]
We are dealing with \textit{linear} inference layers of the form $f(\vAct) = W \vAct + \vw_0$.  Hence, letting $\vw_y \in \bbR^b$ be the vector of inference layer weights $W$ associated to class $y \in [m]$, and $w_{y,0}$ be the corresponding bias term, the second term in \cref{eq:whatever} translates to:
\[
    \inner{\vw_{y'}}{\vAct'} + \vw_{y', 0} > \inner{\vw_{\hat y}}{\vAct'} + \vw_{\hat y, 0}.
    \label{eq:whatever2}
\]
We ignore follow-up monotonic transformations, \eg sigmoid and softmax activations, as they do not alter the ranking between classes.
In order to find a counterexample, the oracle has to find a class $y' \ne \hat y$ out of $m - 1$ possible alternatives and a vector $\vAct' \in \calAct$ that satisfies \cref{eq:whatever2}.  This can be readily done by solving, for each candidate $y'$, the following optimization problem:
\begin{align}
    \max_{\vAct'} \quad
        & \inner{\vw_{y'} - \vw_{\hat y}}{\vAct'} + (\vw_{y', 0} - \vw_{\hat y, 0})
    \\
    \mathrm{s.t.} \quad
        & \vAct'_\calE = \vAct_\calE \land \vAct' \in \calAct
\end{align}
The objective is \textit{linear} in $\vAct'$.  At the same time, in our setting the feasible set $\calAct$ can be encoded as a (conjunction of) \textit{linear constraints}.
For cosine similarities ($\calAct = [-1, 1]^b$), ReLU activations ($\calAct = [0, +\infty)^b$), and raw logits ($\calAct = (-\infty, +\infty)^b$), the constraint is a conjunction of axis-aligned open or closed bounds $a_j \le s'_j \le b_j$.
For probabilistic activations ($\calAct \equiv \vdelta{}\vw \subseteq [0, 1]^b$, where $\vdelta{}\vw$ is a product of simplices) it is a conjunction of per-concept linear constraints $\sum_j \Act_{i,j} = 1$, where $i$ runs over concepts and $j$ over activations encoding that concept, and non-negativity constraints $s_j \ge 0$ running over all activations.
Hence, finding a counterexample boils down to a linear program (LP).

\textbf{In short}: the oracle is a collection of $m - 1$ LPs.  If the LP for class $y'$ admits a positive objective, its solution $\vAct'$ is a valid counterexample for $\calE$ within $\calAct$, and the oracle returns it.  Otherwise, if no LP yields a positive objective, $\calE$ admits no counterexample and the oracle returns {\tt false}.

\subsection{The Heuristic Algorithms}
\label{sec:extra-algorithms-heuristics}

\begin{wrapfigure}[8]{r}{0.45\linewidth}
    \vspace{-2.25em}
    \begin{minipage}[t]{\linewidth}
    \begin{algorithm}[H]
        \caption{$\textsc{TopK}^\star(\calAct, b, h)$}
        \label{alg:extra-topk-star}
        \begin{algorithmic}[1]
            \State $\pi \gets [b]$ decreasing by $h$'s criterion
            \For{$k = 1, \ldots, b$}
                \If{not $\textsc{Oracle}(\pi_{1:k}, \calAct)$}
                    \State \Return $\pi_{1:k}$
                \EndIf
            \EndFor
            \State \Return $[b]$
        \end{algorithmic}
    \end{algorithm}
    \end{minipage}
\end{wrapfigure}
These are the three strategies introduced in \cref{eq:heuristics}.  Given a decision $(\vAct, \hat y)$ and a budget $k$, they sort the $b$ concept activations by either largest activation (\TopKAct), largest absolute weight (\TopKWei), or largest absolute score (activation times weight) (\TopKProd), and construct an explanation $\calE$ by taking the top $k$ activations.
This procedure, however, does not guarantee the output is an \AXp (faithful and subset-minimal) or even a weak \AXp (faithful but not subset-minimal).  Crucially, \textit{the likelihood of finding a (weak) \AXp increases with the budget $k$, making a fair comparison with \OurExactAlgorithm difficult, as the latter does not require to fix $k$}.

For this reason, rather than fixing $k$ \textit{a priori}, we compute it dynamically by taking the smallest $k$ such that the resulting explanation a (weak) \AXp{s}.  This is precisely what {\tt TopK$^*$} (\cref{alg:extra-topk-star}) does:  it first ranks the $b$ concept activations according to value, absolute weight, or absolute score -- depending on the choice of heuristic $h$ -- and then checks prefixes of increasing size $k$ until it finds a (weak) \AXp.  If no proper prefix is faithful, it returns the entire bottleneck.  This ensures the resulting explanation is a weak \AXp at the cost of at most $b$ calls to {\sc Oracle}.

\clearpage
\subsection{The \OriginalExactAlgorithm algorithm}
\label{sec:extra-algorithms-ea}

\begin{wrapfigure}[7]{r}{0.45\linewidth}
    \vspace{-2.25em}
    \begin{minipage}[t]{\linewidth}
        \begin{algorithm}[H]
        \caption{$\textsc{Shrink}(\calAct, \calE)$}
        \label{alg:extra-shrink}
        \begin{algorithmic}[1]
            \For{$j \in \calE$}
                \If{not $\textsc{Oracle}(\calE \setminus \{j\}, \calAct)$}
                    \State $\calE \gets \calE \setminus \{j\}$
                \EndIf
            \EndFor
            \State \Return $\calE$
        \end{algorithmic}
        \end{algorithm}
    \end{minipage}
\end{wrapfigure}
This algorithm is a direct application of the {\sc Shrink} procedure in \cref{alg:extra-shrink}.  The procedure takes a weak \AXp $\calE \subseteq [b]$ as input and iterates over all indices $j \in \calE$ in some order (in practice, lowest to highest, for simplicity):  for each $j$, it checks whether $\calE$ remains a weak \AXp{} -- \ie whether it still admits no counterexamples, using {\sc Oracle} -- even if $j$ is removed.  In this case, it removes $j$, otherwise it keeps it.  Note that \textit{the output is guaranteed to be a subset-minimal weak \AXp}, that is, an \AXp.
The \OriginalExactAlgorithm algorithm simply invokes {\sc Shrink} on the full bottleneck $\calE = [b]$, which is a weak \AXp by construction.  The oracle is called exactly $b$ times.

\subsection{The \OurExactAlgorithm algorithm}
\label{sec:extra-algorithms-ea-plus}

\begin{wrapfigure}[19]{r}{0.45\linewidth}
    \vspace{-2.25em}
    \begin{minipage}[t]{\linewidth}
    \begin{algorithm}[H]
        \caption{$\OurExactAlgorithm(\calAct, b)$}
        \label{alg:extra-ea-plus}
        \begin{algorithmic}[1]
        \For{each heuristic $h$}
            \State $\calE_h \gets \textsc{TopK}^\star_{bs}(\calAct, b, h)$
            \State $\calE_h \gets \textsc{Shrink}(\calAct, \calE_h)$
        \EndFor
        \State \Return $\argmin_h |\calE_h|$
        \end{algorithmic}
    \end{algorithm}

    \begin{algorithm}[H]
        \caption{$\textsc{TopK}^\star_{bs}(\calAct, b, h)$}
        \label{alg:extra-topk-star-bs}
        \begin{algorithmic}[1]
            \State $\pi \gets [b]$ decreasing by $h$'s criterion
            \State $\alpha \gets 0$; \; $\beta \gets b$
            \While{$\alpha < \beta$}
                \State $\mu \gets \lfloor \frac{1}{2} (\alpha + \beta) \rfloor$
                \If{not $\textsc{Oracle}(\pi_{1:\mu}, \calAct)$}
                    \State $\beta \gets \mu$
                \Else
                    \State $\alpha \gets \mu + 1$
                \EndIf
            \EndWhile
            \State \Return $\pi_{1:\alpha}$
        \end{algorithmic}
    \end{algorithm}
    \end{minipage}
\end{wrapfigure}
\OriginalExactAlgorithm ignores instance-specific information (\eg the values and weights of the concept activations) and in doing so may return an overly large \AXp.  At the same time, it requires $b$ oracle calls, which can be problematic for large bottlenecks.

Our algorithm, \OurExactAlgorithm, implements two simple modifications that aim to lessen both issues, see \cref{alg:extra-ea-plus}.  It invokes {\tt TopK$^*$} three times, once for each heuristic $h$, obtaining three weak \AXp $\calE_h$.  Then it applies the {\sc Shrink} procedure for removing possibly useless activations to each of them.  Out of the three resulting \AXp{s}, it returns the smallest.

Moreover, rather than invoking \cref{alg:extra-topk-star}, which costs $b$ oracle calls, it employs a more efficient variant that employs binary search instead, listed in \cref{alg:extra-topk-star-bs}.  The goal is to find the shortest prefix of the bottleneck -- sorted according to the given heuristic $h$ -- that is faithful.  Rather than scanning prefixes linearly as \cref{alg:extra-topk-star} does, it employs binary search: in each iteration, it checks whether the prefix delimited by $(1, \frac{1}{2}\lfloor\alpha + \beta\rfloor)$ admits a counterexample.  If it does not, it is faithful, so the algorithm halves it (by decreasing $\beta$), and repeats.  If it does, it is \textit{not} faithful, so the algorithm enlarges it (by increasing $\alpha$), and repeats.  This modification reduces the number of oracle calls to $O(\log_2 b)$.  This offsets the total cost of having to call the procedure for three times total, once per heuristic.

\clearpage
\newpage
\section{Conditions for (In)Existence of Weak \AXp{s}}
\label{sec:proofs-appendix}

Recall that an explanation is a weak \AXp insofar as it does not allow to \textit{flip} the CBM's prediction by manipulating the concept activations it does not cover.  In this section, we make explicit the conditions for flipping the label of an arbitrary inference layer weights $W$ for counterexamples taken from an activation space of the form
\[
    \textstyle
    \calAct = \bigtimes_{i=1}^b [\alpha_i, \beta_i].
\]
Here, $\alpha_i, \beta_i \in \bbR$ are arbitrary (finite or infinite) per-activation constants  with $\alpha_i < \beta_i$.  This will be our departure point for discussing four different continuous activation spaces $\calAct$ most common in the CBM literature (as discussed in \cref{sec:existing-cbms-suck}), namely:
\begin{itemize}[leftmargin=1.5em]

    \item \textbf{Unbounded}: $\calAct = (-\infty, \infty)^b$, \eg when concept activations are unconstrained logits, as in many CBM architectures (see \cref{tab:methods}).

    \item \textbf{Bounded-from-below}: $\calAct = [\alpha, \infty)^b$, \eg as in Selective CBMs \citep{barker2023selective} where activations are only retained if they exceed a threshold $\alpha \geq 0$. 

    \item \textbf{Bounded}: $\calAct = [\alpha, \beta]^b$, \eg in the case of independent probabilistic activations, in which case $\alpha = 0$, $\beta = 1$.

    \item \textbf{Bounded-on-simplices}: $\calAct \subseteq [0, 1]^b$, a product of per-concept simplices, representing the case of probabilistic modeling with (binary and) categorical concepts.
    
\end{itemize}

\textbf{Notation}.  We focus on concept activations $\vAct \in \calAct$ associated to an input $\vx \in \calX$.  Recall that the inference layer is a linear function $f: \calAct \to \bbR^m$, \ie
\[
    f(\vAct) = W \vAct + \vw_0
    \label{eq:label-logits}
\]
where $W \in \bbR^{m \times b}$ and $\vw_0 \in \bbR^m$, and the prediction is  
\[
    \hat y := \argmax_{y \in [m]} f_y ( \vAct) \, ,
    \quad \text{where} \quad
    f_y (\vAct) = \vw_y^\top \vAct + w_{0,y} \, .
\]
For any class $y' \in [m]$, we use $\vw_{y'} \in \bbR^b$ to indicate its associated weights.
Given a non-predicted class $j$ and a prediction $\hat y$, we denote the deltas between their associated weights and biases as:
\[
    \Deltaj := \vw_j - \vw_{\hat y} \, ,
    \quad \text{and} \quad
    \delta w^{(j)}_0 := w_{0,j} - w_{0, \hat y}
\]
and the difference between their logits as 
\[
    \logdiffj
        := f_j(\vAct) - f_{\hat y}( \vAct)
        = \Deltaj \cdot \vAct + \delta w^{(j)}_0 \, .
\]
Finally, for any real $a \in \bbR$, we write
\[
    a^+ := \max (0, a), \quad 
    a^- := \max(0, -a) \, .
\]

\subsection{The Independent Concept Activations Case}

Throughout, we will use the shorthands $\bar{\calE} := [b] \setminus \calE$ and $\calAct_\calE(\vAct) = \{\vAct' \in \calAct : \vAct'_\calE = \vAct_\calE  \}$.  First, we identify general conditions for label flipping in a bounded asymmetric domain when fixing a subset $\calE \subset [b]$ of concept activations.  Our lemma follows on the footsteps of \citet{marques2020explaining}.

\begin{lemma}
    \label{lemma:label-flip-asymmetric-bounded-domain}
    Let $\calAct = \bigtimes_{i=1}^b[\alpha_i, \beta_i]$.  Take any $\vAct \in \calAct$, a prediction $\hat{y} \in \argmax_{y} f_y(\vAct)$, a non-predicted class $j \in [m]$, and a subset $\calE \subset [b]$.  Let
    $
        \mu_j := \max_{\vAct^* \in \calAct_\calE(\vAct)} \delta\mu_j (\vAct^*)
    $
    be the largest logit difference between classes $j$ and $\hat y$ achievable by manipulating concept activations outside of $\calE$.  This quantity can be rewritten as:
    \[
        \mu_j = \logdiffj + \sum_{i \in \bar\calE} \left[
        (\delta{}w^{(j)}_i)^+ (\beta_i - s_i) + (\delta{}w^{(j)}_i)^- (s_i - \alpha_i)
        \right] \, .
    \]
\end{lemma}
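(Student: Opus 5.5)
The plan is to exploit the fact that the objective is linear and the feasible set is a box, so the maximization separates across coordinates. Fix $\vAct \in \calAct$, the class $j$, and $\calE$. Any $\vAct^* \in \calAct_\calE(\vAct)$ agrees with $\vAct$ on $\calE$, and its coordinates in $\bar\calE$ range independently over $[\alpha_i,\beta_i]$. First I will rewrite the objective relative to the current point:
\[
    \delta\mu_j(\vAct^*) = \logdiffj + \sum_{i \in \bar\calE} \delta{}w^{(j)}_i \, (s^*_i - s_i).
\]
This holds because $\delta\mu_j(\vAct^*) - \logdiffj = \Deltaj \cdot (\vAct^* - \vAct)$, and the difference $\vAct^* - \vAct$ vanishes on $\calE$.

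Next I will maximize each summand separately over $s^*_i \in [\alpha_i, \beta_i]$, which is legitimate since the terms depend on disjoint coordinates and the constraint set is a product. A linear function $t \mapsto c(t - s_i)$ on $[\alpha_i,\beta_i]$ attains its maximum at $\beta_i$ if $c > 0$, giving $c(\beta_i - s_i)$. It attains it at $\alpha_i$ if $c < 0$, giving $(-c)(s_i - \alpha_i)$. If $c = 0$ the value is $0$. With $c = \delta{}w^{(j)}_i$, these three cases collapse to $(\delta{}w^{(j)}_i)^+(\beta_i - s_i) + (\delta{}w^{(j)}_i)^-(s_i - \alpha_i)$, because at most one of $c^+$ and $c^-$ is nonzero. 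Summing the per-coordinate maxima and adding back $\logdiffj$ gives the claimed formula, and the maximizer is the vertex that sets each free coordinate to $\beta_i$ or $\alpha_i$ according to the sign of the weight delta.

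The only real subtlety, and the step I would be careful with, is infinite endpoints. When $\beta_i = +\infty$ or $\alpha_i = -\infty$, the ``max'' should be read as a supremum in the extended reals. I will adopt the convention $0 \cdot \infty = 0$, so that a zero weight delta contributes nothing even on an unbounded side. With that convention the identity holds with $\mu_j = +\infty$ exactly when some $i \in \bar\calE$ has a nonzero delta pointing toward an infinite bound. For the same reason I will treat the intervals as closed where finite and as half-lines otherwise, so that the supremum argument carries through unchanged. This is also the observation that later yields \cref{prop:no-apx-for-regular-cbms} as a corollary.
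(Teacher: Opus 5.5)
Your proposal is correct and follows essentially the same route as the paper: split the objective into fixed and free coordinates, then push each free coordinate to $\beta_i$ or $\alpha_i$ according to the sign of $\delta{}w^{(j)}_i$. The only cosmetic difference is that you subtract $\logdiffj$ before maximizing, whereas the paper maximizes first and then adds and subtracts $\sum_{i \in \bar\calE} \delta{}w^{(j)}_i s_i$; your treatment of infinite endpoints via suprema and the convention $0 \cdot \infty = 0$ is also more careful than the lemma's proof, which leaves that convention to a later footnote.
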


\begin{proof}
    Recall that the difference in logits for activation $j$ is given by
    \[
        \logdiffj = \Deltaj \cdot \vAct + \delta w^{(j)}_0 \, .
    \]
    We expand the dot product as follows
    \begin{align}
        \logdiffj 
            &= \sum_{i \in \calE} \delta{}w^{(j)}_i s_i
                + \sum_{i \in \bar\calE} \delta{}w^{(j)}_i s_i
                + \delta w^{(j)}_0 \, .
                \label{eq:ema-one}
    \end{align}
    Now, take any non-fixed concept activation $i \in \bar\calE$.  Following \citep{marques2023logic}, we maximize the logit of class $j$ by setting $s_i$ to $\beta_i$ if $\delta{}w^{(j)}_i$ is positive and to $\alpha_i$ if $\delta{}w^{(j)}_i$ is negative for each $i \in \bar\calE$.
    Plugging this into \cref{eq:ema-one} and maximizing shows that $\mu_j$, the maximum difference in logits attainable for class $j$, is:
    \begin{align}
        \max_{\vAct^* \in \calAct_{\calE}(\vAct)} \delta\mu_j (\vAct^*)
            &= \sum_{i \in \calE} \delta{}w^{(j)}_i s_i + \sum_{i \in \bar\calE} \left[
                (\delta{}w^{(j)}_i)^+ \beta_i - (\delta{}w^{(j)}_i)^- \alpha_i
            \right] + \delta w^{(j)}_0
        \\
            &= \underbrace{
                \sum_{i \in \calE} \delta{}w^{(j)}_i s_i + \sum_{i \in \bar\calE} \delta{}w^{(j)}_i s_i + \delta w^{(j)}_0
            }_{\logdiffj} \notag
        \\
            & \qquad \qquad \qquad \qquad \qquad
                - \sum_{i \in \bar\calE} \delta{}w^{(j)}_i s_i + \sum_{i \in \bar\calE} \left[
                    (\delta{}w^{(j)}_i)^+ \beta_i - (\delta{}w^{(j)}_i)^- \alpha_i 
                \right]
        \\
        &= \logdiffj +
        \sum_{i \in \bar\calE} \left[ (\delta{}w^{(j)}_i)^+ (\beta_i - s_i) + (\delta{}w^{(j)}_i)^- (s_i - \alpha_i) 
        \right]
        \label{eq:expression-of-mj}
    \end{align}
    This proves the claim.
\end{proof}

The reason we care about $\mu_j$ is that it allows us to rewrite the condition under which a label-flipping counterexample exists as follows:  there must exist a class $j \ne \hat y$ such that $\mu_j > 0$, \ie that attains a larger logit than the prediction $\hat y$.  Equivalently, we can write $\max_{j \ne \hat y} \mu_j > 0$.
Next, we exploit \cref{lemma:label-flip-asymmetric-bounded-domain} to identify explicity conditions under which different choices $\calAct$ admit (weak) \AXp{s} smaller than the bottleneck:

\begin{proposition}[Extended \cref{prop:no-apx-for-regular-cbms}]
\label{prop:label-flip-conditions}
    For any CBM prediction $(\vAct, \hat y) \in \calAct \times [m]$, an explanation $\calE \subset [b]$ smaller than the entire bottleneck \textbf{cannot} be a weak \AXp unless: 
    \begin{enumerate}[leftmargin=1.5em]

        \item \textbf{Unbounded} ($\calAct = (-\infty, \infty)^b$)
        There exists no concept activation $i \in \bar{\calE}$ and alternative class $j \neq \hat y$ such that $\delta{}w^{(j)}_i \neq 0$.

        \item \textbf{Bounded from below}: ($\calAct = [\alpha, \infty)^b$.)
        There exists no concept activation $i \in \bar\calE$ and alternative class $j \neq \hat y$ such that $\delta{}w^{(j)}_i > 0$.

        \item \textbf{Bounded symmetric}: ($\calAct = [\alpha, \beta]^b$) There exists no alternative class $j \neq \hat y$ such that $\max_{j \neq \hat y} \mu_j > 0$, where $\mu_j$ can be written as 
        \[
            \mu_j = \delta\mu_j(\bar \vAct) +\frac{\beta - \alpha}{2}  
            \lVert 
            \Deltaj_{\bar{\calE}}
            \lVert_1 \, ;
        \]
        and $\bar \vAct$ is such that $\bar \vAct_{\bar{\calE}} = \frac{\alpha + \beta}{2}  \,\mathbf{1}_{\bar{\calE}}$ is the mid-point between $\alpha$ and $\beta$ for all coordinates in $\bar{\calE}$. 

        \item \textbf{Bounded asymmetric}:
        ($\calAct = \bigtimes_{i=1}^b [\alpha_i, \beta_i]$).  There exists no alternative class $j \ne \hat y$ such that $\max_{j \neq \hat y} \mu_j > 0$, where $\mu_j$ can be written as 
        \[
            \mu_j = \logdiffj + \sum_{i \in \bar{\calE}} \left[
                (\delta{}w^{(j)}_i)^+ (\beta_i - s_i) + 
                (\delta{}w^{(j)}_i)^- (s_i - \alpha_i)
            \right] \, ;
            \label{eq:ema-two}
        \]
        
    \end{enumerate} 
    
\end{proposition}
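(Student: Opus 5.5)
The plan is to reduce all four cases to \cref{lemma:label-flip-asymmetric-bounded-domain}. That lemma says $\calE$ is a weak \AXp iff $\max_{j \neq \hat y} \mu_j \le 0$. Here $\mu_j$ is the supremum of $\delta\mu_j$ over $\calAct_\calE(\vAct)$, and it is given by the closed form \cref{eq:expression-of-mj}.

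The first step is to justify this equivalence. If some $\mu_j > 0$, the maximizer $\vAct^*$ built in the lemma proof (set each free coordinate to $\beta_i$ or $\alpha_i$ according to the sign of $\delta w^{(j)}_i$) satisfies $f_j(\vAct^*) > f_{\hat y}(\vAct^*)$. It is therefore a counterexample in the sense of \cref{def:axps}. Conversely, if all $\mu_j \le 0$, no $\vAct'$ agreeing on $\calE$ can give a strictly larger score to any $j$, so $\hat y$ stays in the argmax. When a bound is infinite, I replace the maximum by a supremum. I then argue directly that a positive supremum, including $+\infty$, is witnessed by some finite $\vAct'$: along a ray, the objective is linear in the free coordinates.

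\textbf{Case 4 (bounded asymmetric).} Every $\alpha_i, \beta_i$ is finite, so this case is exactly the lemma plus the equivalence above.

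\textbf{Case 3 (bounded symmetric).} I specialize to $\alpha_i = \alpha$ and $\beta_i = \beta$ and rewrite around the midpoint $m = (\alpha+\beta)/2$. For $i \in \bar\calE$ I write $\beta - s_i = (\beta - m) + (m - s_i)$ and $s_i - \alpha = (\beta - m) - (m - s_i)$, noting that $\beta - m = m - \alpha = (\beta-\alpha)/2$. The per-coordinate bracket then becomes $\frac{\beta-\alpha}{2}|\delta w^{(j)}_i| + \delta w^{(j)}_i (m - s_i)$. Summing over $\bar\calE$ and adding $\logdiffj$ gives $\delta\mu_j(\bar\vAct) + \frac{\beta-\alpha}{2}\|\Deltaj_{\bar\calE}\|_1$. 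This is because replacing $s_i$ by $m$ on $\bar\calE$ in the logit difference is exactly the linear correction term.

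\textbf{Cases 1 and 2 (unbounded and bounded from below).} Here the lemma's formula involves infinite bounds, so I argue by limits, or directly. In the unbounded case, suppose some $i \in \bar\calE$ and $j \ne \hat y$ have $\delta w^{(j)}_i \neq 0$. Set $s'_i = s_i + t\,\mathrm{sign}(\delta w^{(j)}_i)$ and leave all other coordinates unchanged. Then $\delta\mu_j$ grows like $\logdiffj + t|\delta w^{(j)}_i|$, which is positive for $t$ large, giving a counterexample. In the bounded-from-below case, a $\delta w^{(j)}_i > 0$ lets me push $s'_i \to +\infty$ in the same way. Negative weights do not yield a direct counterexample, since those coordinates are bounded below. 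This matches the stated necessary conditions.

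I expect the main obstacle to be making the ``cannot be a weak \AXp unless'' phrasing match the mathematics, rather than any single calculation. In cases 1 and 2 the stated condition is only necessary. In case 2, even with all $\delta w^{(j)}_i \le 0$, the finite quantity $\mu_j$ (using $\beta_i = \infty$ and zero positive parts) must still be nonpositive. I will state this as necessity and point to the lemma for the exact criterion. A secondary subtlety is that the argmax may have ties. I resolve this by defining a flip as some $j$ with $\mu_j > 0$ strictly, consistent with \cref{def:axps}.
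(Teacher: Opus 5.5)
Your proposal is correct and follows essentially the same route as the paper: the two bounded cases reduce to the lemma, where your midpoint decomposition of $\beta - s_i$ and $s_i - \alpha$ is the same identity as the paper's $\max(a,b) = (a+b+|a-b|)/2$ step. The unbounded and bounded-from-below cases are handled, as in the paper, by sending a free coordinate with nonzero (respectively positive) weight difference to infinity; you do this with an explicit finite witness, which is a little cleaner than the paper's limit argument. One small correction to your closing remark: in the unbounded case the stated condition is also sufficient, not only necessary, because if $\delta w^{(j)}_i = 0$ for all $i \in \bar\calE$ and $j \neq \hat y$, then no logit difference can change at all.
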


\begin{proof}
    Recall that finding the conditions for $\calE$ to be an \AXp{} is equivalent to find conditions for which a label flip cannot occur.  Since cases satisfy the conditions of \cref{lemma:label-flip-asymmetric-bounded-domain}, and we will exploit it below.  We begin from the most general case.

    \textbf{Bounded asymmetric}.  \cref{eq:ema-two} follows directly from the lemma.  An adversary can flip the label if and only if there exist a counterexample $\vAct^* \in \calAct_\calE (\vAct)$ such that $\argmax_j f_j(\vAct^*) \neq \argmax_j f_j(\vAct)$, or, equivalently, $\max_{j \neq \hat y} \mu_j > 0 $.
    If $\mu_j > 0$, then $\max_{\vAct^* \in \calAct_{\calE}(\vAct)} f_j ( \vAct^*) - f_{\hat y} (\vAct^*) > 0$, that is, $\vAct^*$ ensures class $j$ attains a larger logit than the prediction $\hat y$.  Conversely, if $\max_j \mu_j \leq 0$, no $\vAct^*$ can induce a label flip.

    \textbf{Bounded symmetric}.  This is a corollary of \cref{lemma:label-flip-asymmetric-bounded-domain}, as the symmetric domain $[\alpha, \beta]^b$ is a special case of the asymmetric case. 
    Denoting with $\bar \vAct \in \calAct_\calE(\vs)$ the activation with $\bar \vAct_{\bar\calE} = \frac{\alpha + \beta}{2} \mathbf{1}_{\bar\calE}$, we can write
    \begin{align}
        \max_{\vs^* \in \calAct_\calE(\vs)} \delta \mu_j(\vAct^*) &= \sum_{i \in \calE} \delta{}w^{(j)}_i s_i + \sum_{i \in \bar{\calE}} \max \big( 
        \delta{}w^{(j)}_i \beta, \delta{}w^{(j)}_i \alpha
        \big)  + \delta w^{(j)}_0 \\
        &= \underbrace{\sum_{i \in \calE} \delta{}w^{(j)}_i s_i + \sum_{i \in \bar{\calE}} \delta{}w^{(j)}_i \bar s_i +\delta w^{(j)}_0}_{\delta\mu_j(\bar \vAct)}  - \sum_{i \in \bar{\calE}} \delta{}w^{(j)}_i \bar s_i +
        \sum_{i \in \bar{\calE}} \max \big( 
        \delta{}w^{(j)}_i \beta, \delta{}w^{(j)}_i \alpha
        \big)
        \\ 
        &= \delta\mu_j(\bar \vAct) - \frac{\alpha + \beta}{2} \sum_{i \in \bar{\calE}} \delta{}w^{(j)}_i +
        \sum_{i \in \bar{\calE}} \max \big( 
        \delta{}w^{(j)}_i \beta, \delta{}w^{(j)}_i \alpha
        \big)
    \end{align}
    Now we make use of $\max (a, b) = (a + b + |a-b|)/ 2$ to rewrite
    \[
        \max \big( 
        \delta{}w^{(j)}_i \beta, \delta{}w^{(j)}_i \alpha
        \big) = \frac{(\alpha + \beta)}{2}\delta{}w^{(j)}_i +  \frac{(\beta - \alpha)}{2}|\delta{}w^{(j)}_i|
    \]
    and substituting back to the previous expression we get the result:
    \begin{align}
        \max_{\vs^* \in \calAct_\calE(\vs)}\delta\mu_j ( \vAct^*) &=
        \delta\mu_j(\bar \vAct) - \frac{\alpha + \beta}{2} \sum_{i \in \bar{\calE}} \delta{}w^{(j)}_i +
        \sum_{i \in \bar{\calE}} \left[ \frac{(\alpha + \beta)}{2}\delta{}w^{(j)}_i +  \frac{(\beta - \alpha)}{2}|\delta{}w^{(j)}_i| \right] \\
        &= \delta\mu_j(\bar \vAct) + \frac{\beta - \alpha}{2} \lVert \Deltaj_{\bar{\calE}} \lVert_1
    \end{align}

    \textbf{Bounded from below}.  We first check if there are any positive weights. Because $\calAct$ is only bounded from below, by rewriting the logit difference expression as
    \[
        \delta \mu_j ( \vAct^*)  = \logdiffj + \sum_{i \in \bar{\calE}} \max(\delta{}w^{(j)}_i, 0) (s^*_i - s_i) + \sum_{i \in \bar{\calE}} \max(-\delta{}w^{(j)}_i, 0) (s_i - s^*_i)
    \]
    we can see that the quantity $(s_i^* - s_i)$ is also unbounded. 
    Therefore, any positive weight $\delta{}w^{(j)}_i$ would naturally allow for $\vAct^* \in \calAct$ with fixed $\vAct^*_\calE = \vAct_\calE$ that give positive $\ell\mu_j (\vAct^*)$. 
    If $\exists j \neq \hat y, \exists i \in \bar{\calE}$, such that $\delta{}w^{(j)}_i > 0$, we get  
    \[
        \lim_{s^*_i \to \infty } \delta \mu_j (\vAct^*) = \infty \, .
    \]
    When these conditions are not met, \ie for all $j \neq \hat y$ and $i \in \bar{\calE}$, 
    $\delta{}w^{(j)}_i < 0$,
    the $\mu_j$ expression can be derived from \cref{lemma:label-flip-asymmetric-bounded-domain} and simplifies as\footnote{To be precise, here we are taking the convention here that $0 \cdot (+ \infty) = 0$ as resulting from the limit of $\beta \to \infty$.}
    \begin{align}
        \mu_j &= \logdiffj + \sum_{i \in \bar{\calE}} \left[ \underbrace{(\delta{}w^{(j)}_i)^+}_{=0} (\beta - s_i) + 
        (\delta{}w^{(j)}_i)^- (s_i - \alpha)
        \right] \\
        &= \logdiffj - \sum_{i \in \bar{\calE}} 
        \delta{}w^{(j)}_i (s_i - \alpha)
    \end{align}

    \textbf{Unbounded}. follows directly from the previous case.  Since the domain is unbounded, we have that if $\exists j \neq \hat y, \exists i \in \bar{\calE}$, such that $\delta{}w^{(j)}_i > 0$,
    \[
        \lim_{s^*_i \to \infty} \delta\mu_j (\vAct^*) = \infty
    \]
    or, if $\exists j \neq \hat y, \exists i \in \bar{\calE}$, such that $\delta{}w^{(j)}_i < 0$,
    \[
        \lim_{s^*_i \to -\infty} \delta\mu_j (\vAct^*) = \infty \, .
    \]
    This shows that non-zero weight differences in $\bar{\calE}$ lead to a label flip for unbounded domains.  
\end{proof}

\subsection{The Probabilistic Concept Activations Case}
\label{sec:probability-spaces-appendix}

\cref{prop:label-flip-conditions} only considers independent concept activations.  When modeling concept activations as distributions, activations referring to alternative values of the same categorical concept (\eg different colors) must sum to one, so independence no longer holds.  We study this case separately.

Let $\calAct \subsetneq [0,1]^b$ be the probability simplex for $t > 1$ categorical concepts.  For each concept $c \in [t]$, the number of values it can take are set to $n_c$, such that
\[
    \textstyle
    b = \sum_{c=1}^t n_c \, . 
\]
We indicate with $B_1, \ldots, B_t$ the partitions induced by the $t$ concepts over the $b$ total activations.  
The probability simplex is given by
\[
    \calAct = \bigtimes_{c=1}^t \Delta^{n_c - 1}, 
    \quad
    \Delta^{n_c - 1} := \{ \vz \in [0,1]^{n_c}: \sum_{i \in B_c} z_i = 1 \} \, . 
\]
For a given $\calE \in [b]$, we denote with $F_c := B_c \setminus \calE$ the free coordinates per concept $c$ that are not fixed by $\calE$. 
Because activations in $\calE$ are fixed, the per-concept residual probability mass is given by
\[
    \gamma := \sum_{\calE \in F_c} s_i = 1 - \sum_{i \in B_c} s_i \, .
\]
We use this to prove how $\mu_j$ changes:

\begin{proposition}
    \label{prop:label-flip-conditions-proba}
    Let $\calAct = \bigtimes_{c=1}^t \Delta^{n_c-1}$, let $\vAct \in \calAct$, let $\hat y$ be the argmax of $f(\vAct)$ and let $\calE \subset [b]$ be fixed. 
    Define, for $j \neq \hat y$,
    \begin{equation}
        \mu_j := \logdiffj + \sum_{c=1}^{t} \rho_c, \quad
        \rho_c :=
        \left[ \gamma_c \max_{i \in F_c} \delta{}w^{(j)}_i
        - \sum_{i \in F_c} \delta{}w^{(j)}_i \, s_i 
        \right]
        \label{eq:Mj-simplex}
    \end{equation}
    with the convention $\rho_c = 0$ when $F_c = \emptyset$.  A label flip is possible if and only if $\max_{j \neq \hat y} \mu_j > 0$. 
\end{proposition}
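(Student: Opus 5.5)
The plan is to mirror the proof of \cref{lemma:label-flip-asymmetric-bounded-domain}. We show that $\mu_j$ in \cref{eq:Mj-simplex} equals $\max_{\vAct^* \in \calAct_\calE(\vAct)} \delta\mu_j(\vAct^*)$. The equivalence ``label flip $\iff \max_{j\neq\hat y}\mu_j>0$'' then follows verbatim from the argument used for the bounded asymmetric case of \cref{prop:label-flip-conditions}. If some $\vAct^*\in\calAct_\calE(\vAct)$ gives $f_j(\vAct^*)>f_{\hat y}(\vAct^*)$, then the maximum is positive. Conversely, a positive maximum is attained, since $\calAct_\calE(\vAct)$ is a nonempty compact polytope, and the maximizer is a counterexample.

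To compute the maximum, first split $\delta\mu_j(\vAct^*) = \sum_{i\in\calE}\delta w^{(j)}_i s_i + \sum_{i\in\bar\calE}\delta w^{(j)}_i s^*_i + \delta w^{(j)}_0$. The first and last terms are constant on $\calAct_\calE(\vAct)$. Next observe that $\calAct_\calE(\vAct)$ factorizes over concepts. The simplex constraints act on each block $B_c$ separately, and fixing $\vAct_\calE$ leaves the free coordinates $F_c$ ranging over $\{\vz\in[0,1]^{|F_c|}: \vz\ge 0,\ \sum_{i\in F_c} z_i=\gamma_c\}$, with $\gamma_c = 1-\sum_{i\in B_c\cap\calE}s_i = \sum_{i\in F_c}s_i$. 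Here I would correct the paper's typo in the definition of $\gamma$ along the way. Since $\gamma_c\le 1$, the upper bound $z_i\le 1$ is implied, so each block is a scaled simplex $\gamma_c\Delta^{|F_c|-1}$. The maximization therefore decouples into $t$ independent linear programs $\max_{\vz\in\gamma_c\Delta}\sum_{i\in F_c}\delta w^{(j)}_i z_i$.

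Each such LP over a scaled simplex is maximized at a vertex, so its value is $\gamma_c\max_{i\in F_c}\delta w^{(j)}_i$: put all residual mass on the free coordinate with the largest weight difference. The case $F_c=\emptyset$ contributes $0$, matching the convention. Summing over $c$, we add and subtract $\sum_{i\in\bar\calE}\delta w^{(j)}_i s_i = \sum_c\sum_{i\in F_c}\delta w^{(j)}_i s_i$, exactly as in the lemma, and obtain $\logdiffj + \sum_c\rho_c$.

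The main subtlety is not the algebra but the decoupling step. We must verify that the feasible set of the counterexample problem is exactly the product of the scaled simplices. In particular, we check that the upper bounds $1$ are redundant, that no cross-concept constraints remain, and that the degenerate case $\gamma_c=0$ (which forces $\vz=0$ and gives $\rho_c=-\sum_{i\in F_c}\delta w^{(j)}_i s_i=0$, since then all $s_i=0$) is consistent. Binary concepts modeled by a single sigmoid would need to be treated as two-valued categoricals for the statement to apply. I would note this assumption explicitly rather than handle mixed spaces.
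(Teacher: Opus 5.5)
Your proposal is correct and follows the paper's route. You fix the constant terms on $\calE$, decouple the counterexample problem over concepts, place each concept's residual mass $\gamma_c$ on the free coordinate with the largest $\delta w^{(j)}_i$, and read off the flip condition from the sign of the maximum. It is also more careful than the paper's own proof. That proof writes the per-concept maximum of $\sum_{i\in F_c}\delta w^{(j)}_i(s^*_i - s_i)$ as $\gamma_c\max_{i\in F_c}\delta w^{(j)}_i$, dropping the $-\sum_{i\in F_c}\delta w^{(j)}_i s_i$ term that it restores in the final formula, and its converse only addresses $\mu_j<0$; you instead justify the product-of-scaled-simplices factorization, the $\gamma_c=0$ case, and the full $\max_{j\neq\hat y}\mu_j\le 0$ direction.
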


\begin{proof}
    We start from the logit difference expression for $f_j - f_{\hat y}$ and consider a generic $\vs^* \in \calAct_\calE(\vs)$
    \begin{align}
        \delta\mu_j(\vAct^*) 
        &= \logdiffj + \sum_{i \in \bar\calE} \delta{}w^{(j)}_i (s_i^* - s_i) \\
        &= \logdiffj + \sum_{c = 1}^t \sum_{i \in F_c} \delta{}w^{(j)}_i (s_i^* - s_i) \, ,
    \end{align}
    where the last step follows by the fact that only activations in $F_c$ are allowed to differ to the original $\vAct$. Because activations are bounded in each $c \in [t]$, 
    the maximum gain per concept is given by placing all residual probability mass $\mu_c$ to the highest weight difference, \ie
    \[
        \max_{\vAct^* \in \calAct}  
        \sum_{i \in F_c} \delta{}w^{(j)}_i (s_i^* - s_i) = 
        \gamma_c \max_{i \in F_c} \delta{}w^{(j)}_i \, .
    \]
    Therefore, 
    \[
        \max_{\vAct^* \in \calAct_\calE(\vs)} \delta\mu_j(\vAct^*) = \logdiffj + \sum_{c=1}^{t} \left[ \gamma_c \max_{i \in F_c} \delta{}w^{(j)}_i
        - \sum_{i \in F_c} \delta{}w^{(j)}_i \, s_i 
        \right] = \mu_j \, .
    \]
    Clearly, if $\mu_j > 0$ a label flip is possible and, viceversa, if $\mu_j < 0$ no label flip can happen.
\end{proof}

\subsection{Discussion}
\label{sec:extra-theory-discussion}

We briefly discuss our theoretical results.
First of all, the \textbf{unbounded} case recovers \cref{prop:no-apx-for-regular-cbms}: it shows that unless $\calE$ covers all unbounded activations associated to a non-zero weight difference between the prediction $\hat y$ and an alternative class $j \neq \hat y$, then it cannot be a weak \AXp.  Hence, any weak \AXp is forced to cover \textit{all} concept activations linking the prediction to any alternative class through a non-zero weight difference.  The more such activations a CBM admits, the more (weak) \AXp size blows up.  This is precisely what happens with dense CBMs with unbounded logit activations in \cref{sec:q1}.  It also backs up our suggestion to implement \GLasso at training time, as doing so is likely to reduce the number of non-zero weight differences, in turn shrinking the size of weak \AXp{s}.

The \textbf{bounded from below} case recovers both CBMs modeling concepts using ReLU activations and Selective CBMs \citep{schrodi2025selective}.  The logic is the same as for the unbounded case above, except that now \AXp{s} have to cover all concept activations associated to a \textit{positive} weight difference, while \textit{negative} weight differences can be left uncovered.  While this can shrink explanation sizes, CBMs do not control for weight polarity during training.  In contrast, our \GLasso strategy encourages differences to be zero, typically reducing the number of (positive and negative) weight differences between classes.  This yields the improvements reported in \cref{sec:extra-cbm-vs-selective-cbm}.

The \textbf{bounded symmetric} case shows that what matters are the ranges of activations $[\alpha, \beta]$ and the range of weight differences $\delta \vw^{(j)}_i$ for $i \in \bar\calE$.  Smaller activation ranges leave less room for manipulating the concept activations, and smaller weight differences leave less room for translating these into a higher score for the alternative class $j$.  This backs up our discussion of cosine similarity activations in \cref{sec:existing-cbms-suck}, which constrain activations to $[-1, 1]^b$ and are relevant to VLM-CBMs, and also partially motivates our suggestion to model concepts probabilistically, which constrains activations to $[0, 1]^b$.  It also suggests that shrinking the range of the learned weights, which \ENet and \GLasso both implicitly do, is beneficial for constraining the adversary in the bounded case.  The \textbf{bounded asymmetric} case conveys essentially the same message.

Finally, the \textbf{probabilistic} case shows that the exact condition for label flipping is stricter than that for the pure bounded (symmetric and asymmetric) case, meaning that the product-of-simplices structure of the activation space prevents it even more from constructing counterexamples.  Since the concept activation space is fixed, the only knob for reducing the number of possible counterexamples -- and therefore shrinking weak \AXp{s} -- is the magnitude of the learned weights, which both \ENet and \GLasso can shrink.  Further improvements may be gained by designing sparsification penalties targeting the quantities reported in our propositions; we will explore these in future work.

\clearpage
\newpage
\section{Experimental Details}
\label{sec:expermental-details}

We ran all experiments on a 64-core machine with 256 GiB RAM and two RTX 5090 GPUs.  These are used only for computing the CLIP embeddings and nothing else.  Excluding this step, our experiments take about one full day to complete in this setup.

\textbf{Data preparation}.  For each dataset, we use the released train/validation/test partitions; for \CUB, which has no validation split, we carve one via stratified sampling from the training split.  Images are RGB and passed through CLIP's own preprocessing transform without extra resizing or normalization.  We discuss the preparation of each dataset in turn.

For \underline{\CelebA} \citep{liu2015deep}, we follow the pipeline of \citet{debole2026if}.  We translate each of the $40$ binary concept annotations from $\{ \pm 1 \}$ to $\{0, 1\}$, and predict {\tt Male} from the others.  Rather than subsampling, as in \citep{debole2026concepts}, we keep all splits as they are.  We stress that \CelebA has no categorical concepts and therefore \GLasso does not apply to it.

For \underline{\Derm} \citep{kawahara2018seven}, we take the seven dermoscopic checklist criteria from the {\tt meta.csv} file shipping with the dataset and one-hot encode them into $28$ activations.  Following \citep{kawahara2018seven}, we also collapse the $20$ raw diagnoses into $5$ classes  (basal cell carcinoma, melanoma, nevus, seborrheic keratosis, and a miscellaneous class) by matching keywords.  This reproduces their Table I.  Depending on the experiment, \citet{kawahara2018seven} perform one additional round of label merging; we skip this step.

For \underline{\WBC} \citep{tsutsui2023wbcatt} we one-hot encode the $11$ morphological attributes into $31$ activations, and use the released $5$-way cell-type label.

For \underline{\CUB} \citep{wah2011caltech}, we follow \citet{koh2020concept} by replacing per-image concept annotations with class-level concepts, using only data from the training split to determine the aggregation.  \citet{koh2020concept} aggregation step, however, does not treat categorical concepts are as such.  Specifically, it thresholds each attribute independently, which often does not yield a one-hot vector useful for training categorical concepts.  Moreover,  \citet{koh2020concept} filter out activations that do not contribute to at least $10$ classes, as these are too sparse/noisy to be predicted with high accuracy (\citep[Appendix~A]{koh2020concept}), and only $112$ activations survive this stage.  This also removes categories from categorical concepts.  Since we are interested in evaluating the interpretability of CBMs and the benefits of modeling concept activations probabilistically, rather than sheer prediction accuracy, we depart from their processing.  First, we aggregate annotations by taking, within each of the $28$ categorical concepts the value with the largest share of positive votes, with ties broken at random. Moreover, we skip the filtering step.

\textbf{LLM-generated vocabularies}  All VLM-CBM vocabularies are taken directly from the LF-CBM, VLM-CBM and LaBo GitHub repositories and used as-is to compute, for each image, concept activations.
Concretely, we compute the cosine similarity between each image embedding and each concept textual description, both normalized, obtaining concept activation that individually range in $[-1, 1]$.
For the LF-CBM and VLM-CBM vocabularies, we filter out those activations that do not meet a fixed $0.25$ threshold (computed on the training split, to avoid contamination), as done by \citet{oikarinen2023label}, yielding bottleneck sizes of $b = 202$ for the LF-CBM vocabulary and $b = 419$ for the VLG-CBM one.
Since the LaBo vocabulary is pre-filtered via submodular optimization \citep{yang2023language}, we keep all activations.

\textbf{Architectures}.  For all models, the concept probes and dense inference layers are {\tt  sklearn} \citep{pedregosa2012sklearn} logistic regressors fit using L-BFGS \citep{liu1989limited} with default hyperparameters.
\ENet- and \GLasso-sparsified inference layers are fit with FISTA \citep{beck2009fast} with step size $1/L$, where $L$ is the (approximate) Lipschitz constant of the loss, ensuring stability and quick convergence.

\clearpage
\newpage
\subsection{Concept Activation Spaces: In- vs. Out-of-distribution Counterexamples}
\label{sec:extra-activation-spaces}

In all experiments, we employ a model-specific activation space $\calAct$ for both testing whether heuristic explanations are (weak) \AXp{s}, and for constructing \AXp{s} with the exact algorithms.  As anticipated in \cref{sec:existing-cbms-suck,sec:solutions}, we set $\calAct = (-\infty, \infty)^b$ for logit-based CBMs, $\calAct = [0, \infty)^b$ for ReLU CBMs and Selective CBMs, $\calAct = [-1, 1]^b$ for VLM-CBMs with cosine similarity activations, and $\calAct \subseteq [0, 1]^b$ for CBMs modeling concepts probabilistically.

\textbf{Natural vs. out-of-distribution counterexamples}.  In all cases, the adversary can construct counterexamples by manipulating the activations not covered by explanations \textit{independently from each another} -- except for CBMs modelling concepts probabilistically, where activations of the same categorical concept are mutually constrained to sum to $1$.
This setup ignores correlations between concept activations, meaning (weak) \AXp{s} are required to be robust even against ``out-of-distribution'' (OOD) counterexamples that the bottleneck does not naturally output.
While this choice does not invalidate our modeling and algorithmic contributions, we wish to elucidate the rationale behind it.

First, even though OOD counterexamples cannot be generated by the backbone, they can be manually constructed by (possibly ill-intentioned) stakeholders intervening on the model.  By ensuring our activation spaces $\calAct$ match the full range of attainable activation vectors, \textit{we explicitly ensure our (weak) \AXp{s} are robust against all natural and human-constructed counterexamples}.
Second, ensuring an explanation does not admit naturally occurring counterexamples -- with guarantees -- requires inferring the image of the bottleneck, or at least to build a tractable outer approximation thereof.  This is highly non-trivial \citep{singh2019abstract, zhang2018efficient}, making it technically challenging to construct explanations robust only to naturally occurring counterexamples.
Regardless, while focusing on natural counterexamples only downgrades the guarantees that \AXp{s} provide, doing so \textit{can} in principle shrink \AXp size further.  Hence, as mentioned in \cref{sec:conclusion}, we plan to investigate how to do so in future work.

\clearpage
\section{Additional Experiments}
\label{sec:extra-experiments}

In this section, we report the following additional results:
\begin{description}
    \item[\cref{sec:extra-exp-algorithmic-comparison}] Presents an in-depth algorithmic comparison.
    \item[\cref{sec:extra-exp-logits-vs-relus-vs-proba}]  Extends Q1 to logits vs. \textit{ReLU activations} vs. probabilities, including \CelebA.
    \item[\cref{sec:extra-exp-sparsity-on-proba}]  Extends Q2 to probability-based CBMs, including \CelebA.
    \item[\cref{sec:extra-exp-sparsity-on-vlm-cbms}] Extends Q2 to \textit{all} three available VLM-CBM vocabularies.
    \item[\cref{sec:extra-cbm-vs-selective-cbm}] Evaluates \AXp sizes for Selective CBMs \citep{schrodi2025selective}.
\end{description}

\subsection{Extra: Algorithmic Comparison}
\label{sec:extra-exp-algorithmic-comparison}

We compare \OurExactAlgorithm against the original exact algorithm from \citep{marques2023logic}, denoted \OriginalExactAlgorithm, and the three heuristic algorithms -- \TopKAct, \TopKWei, and \TopKProd{} -- along several dimensions:  explanation size, computational cost, and percentage of output explanations that are both faithful and subset-minimal (as per \cref{def:axps}).
All tests are carried out on a random sample of $100$ test instances, kept fixed across experiments.

Since CBMs with raw logits exhibit pathologic behavior, we focus on CBMs that model concepts with either probabilities ($\calAct \subseteq [0, 1]^b$) and VLM-CBMs modeling them as cosine similarities ($\calAct = [-1, 1]^b$), giving all algorithms a chance of producing compact explanations.
\CelebA has no categorical concepts that \GLasso can work with, so we only consider dense CBMs for this dataset.

\subsubsection{\OurExactAlgorithm Outputs Smaller \AXp{s} than \OriginalExactAlgorithm}
\label{sec:extra-exp-ea-vs-ea-plus}

\begin{table}[!h]
    \centering
    \scalebox{0.9}{
    \begin{tabular}{lccccccccccc}
        \toprule
                        &       & \multicolumn{4}{c}{CBM}
                                & \multicolumn{4}{c}{CBM + \GLasso}
        \\
        \cmidrule(lr){3-6}
        \cmidrule(lr){7-10}
                        &       & \multicolumn{2}{c}{\OriginalExactAlgorithm}
                                & \multicolumn{2}{c}{\OurExactAlgorithm}
                                & \multicolumn{2}{c}{\OriginalExactAlgorithm}
                                & \multicolumn{2}{c}{\OurExactAlgorithm}
        \\
        \cmidrule(lr){3-4}
        \cmidrule(lr){5-6}
        \cmidrule(lr){7-8}
        \cmidrule(lr){9-10}
        {\sc Dataset}   & $b$   & {\sc Size}        & {\sc Calls}   & {\sc Size}        & {\sc Calls}   & {\sc Size}        & {\sc Calls}   & {\sc Size}        & {\sc Calls}
        \\
        \midrule
        \CelebA         & $39$  & \npm{20.8}{4.6}   & $39$          & \npm{\textbf{12.6}}{3.7}   & $102$         & --                & --            & --                & --    \\
        \Derm           & $28$  & \npm{17.1}{1.8}   & $28$          & \npm{\textbf{8.3}}{2.0}    & $53$          & \npm{3.1}{2.9}    & $28$          & \npm{\textbf{2.1}}{0.8}    & $33$  \\
        \WBC            & $31$  & \npm{9.9}{3.6}    & $31$          & \npm{\textbf{8.2}}{3.0}    & $60$          & \npm{2.6}{0.9}    & $31$          & \npm{2.4}{0.8}    & $37$  \\
        \CUB            & $312$ & \npm{169.5}{28.5} & $312$         & \npm{\textbf{46.3}}{18.6}  & $221$         & \npm{47.0}{9.9}   & $312$         & \npm{\textbf{23.5}}{3.6}   & $134$
        \\
        \bottomrule
    \end{tabular}
    }
    \caption{\textbf{Comparison between \OriginalExactAlgorithm and our \OurExactAlgorithm} on regular CBMs with prob. activations.  Bold indicates a size improvement of at least one activation.}
    \label{tab:extra-ea-vs-ea-plus-cbms}
\end{table}

\begin{table}[!h]
    \centering
    \scalebox{0.9}{
    \begin{tabular}{lccccccccccc}
        \toprule
                        &       & \multicolumn{4}{c}{VLM-CBM}
                                & \multicolumn{4}{c}{VLM-CBM + \GLasso}
        \\
        \cmidrule(lr){3-6}
        \cmidrule(lr){7-10}
                        &       & \multicolumn{2}{c}{\OriginalExactAlgorithm}
                                & \multicolumn{2}{c}{\OurExactAlgorithm}
                                & \multicolumn{2}{c}{\OriginalExactAlgorithm}
                                & \multicolumn{2}{c}{\OurExactAlgorithm}
        \\
        \cmidrule(lr){3-4}
        \cmidrule(lr){5-6}
        \cmidrule(lr){7-8}
        \cmidrule(lr){9-10}
        {\sc Dataset}   & $b$   & {\sc Size}        & {\sc Calls}   & {\sc Size}        & {\sc Calls}   & {\sc Size}        & {\sc Calls}   & {\sc Size}        & {\sc Calls}
        \\
        \midrule
        \LFCBM          & $202$ & \npm{201.0}{0.8}  & $202$         & \npm{200.9}{0.9}  & $633$         & \npm{36.9}{5.8}   & $202$         & \npm{\textbf{35.3}}{5.9}   & $308$ \\
        \VLGCBM         & $419$ & \npm{416.5}{1.5}  & $419$         & \npm{416.3}{1.5}  & $1286$        & \npm{78.0}{8.4}   & $419$         & \npm{\textbf{66.5}}{11.3}  & $593$
        \\
        \bottomrule
    \end{tabular}
    }
    \caption{\textbf{Comparison between \OriginalExactAlgorithm and our \OurExactAlgorithm} on VLM-CBMs.  Bold indicates a size improvement of at least one activation.}
    \label{tab:extra-ea-vs-ea-plus-vlmcbms}
\end{table}

As a first experiment, we compare \OriginalExactAlgorithm and \OurExactAlgorithm in terms of explanation size and number of oracle invocations, averaged across the $100$ test instances.  Both algorithms are exact in that they are guaranteed to always output (subset-minimal) \AXp{s}, but neither ensures these are size-minimal.
The results on dense and \GLasso-sparsified CBMs are reported in \cref{tab:extra-ea-vs-ea-plus-cbms} and those for VLM-CBMs in \cref{tab:extra-ea-vs-ea-plus-vlmcbms}.

\textbf{\OurExactAlgorithm always produces smaller \AXp{s}}.  Let us begin from explanation size.
The tables indicate that \OurExactAlgorithm always produces smaller explanations compared to the original algorithm.  This is expected considering it \textit{i}) exploits information about the instance and the model to determine a ranking over activations, and \textit{ii}) picks the smallest \AXp for all three possible heuristic rankings, cf. \cref{sec:extra-algorithms}.
The size reduction is more noticeable for dense CBMs.  E.g., for \CUB, it drops from nearly $170$ activations to about $46$.
The improvement is more limited for \GLasso-sparsified CBMs, especially on \Derm and \WBC.  Owing to the benefits of structured sparsification, \OriginalExactAlgorithm already performs very well.  Still, on \CUB under \GLasso, \OriginalExactAlgorithm explanations cover about $47$ activations, while \OurExactAlgorithm only $24$.

The size delta is minimal for dense VLM-CBMs, regardless of vocabulary, due to cosine similarity activations failing to properly constrain the adversary, while a gap forms for sparse VLM-CBMs, with improvements of about $1.5$ and $11.5$ activations depending on the chosen vocabulary.

\textbf{\OurExactAlgorithm makes less than $3 \times$ oracle calls per run}.  Recall that \OurExactAlgorithm runs each ${\tt TopK}^*$ once, each time making a logarithmic number of oracle calls, whereas \OriginalExactAlgorithm calls the oracle exactly $b$ times with no early exit.  The tables show that, in practice, \OriginalExactAlgorithm tends to make fewer total calls in many cases -- \eg dense \CelebA/dense ($39$ vs. $102$), \Derm/dense ($28$ vs. $53$), \WBC/dense ($31$ vs $60$), \Derm/sparse ($28$ vs. $33$), \WBC ($31$ vs. $37$).  The worst offenders are the dense VLM-CBMs, where \OurExactAlgorithm makes as much as three times the number of calls ($202$ vs. $633$ and $419$ vs. $1286$).  This is because concept activations are independent, meaning the {\sc Shrink} step has to test each of the activations covered by the candidate weak \AXp{s} individually.
However, \OurExactAlgorithm makes \textit{fewer} calls in selected cases, and specifically for dense and sparse CBMs on \CUB ($312$ vs. $221$ and $312$ vs. $134$, respectively).

In short, in the worst case, \OurExactAlgorithm can be more demanding than \OriginalExactAlgorithm, but it typically does not pay the entire $3 \times$ cost.
In fact, while \OurExactAlgorithm invokes ${\tt TopK}^*$ three times -- which helps with finding smaller \AXp{s} -- thanks to binary search it can pay less than the full $3 \times$ price.  As we will show next, despite making more oracle calls, \OurExactAlgorithm remains affordable in terms of wall-clock time.

\subsubsection{\OurExactAlgorithm Is Computationally Affordable}
\label{sec:extra-exp-ea-vs-heuristics-time}

Next, we compare the  practical computational cost \OurExactAlgorithm with that of the {\tt TopK} heuristics.  To this end, we measure wall-clock times and the number of oracle calls, averaged across the $100$ test explanations.  Recall that \OurExactAlgorithm is guaranteed to output an \AXp, while the {\tt TopK} heuristics likely don't unless the budget $k$ is sufficiently large.  For fairness, we compare here \OurExactAlgorithm against the {\tt TopK$^*_{bs}$} algorithm (\cref{alg:extra-topk-star-bs}), which selects $k$ automatically to \textit{ensure} the output is a (weak) \AXp and makes use of \textit{binary search} to cut on the number of oracle calls.  This gives it two substantial advantages over heuristics as used in the literature.

\begin{table}[!h]
    \centering
    \scalebox{0.9}{
    \begin{tabular}{lccccccccc}
        \toprule
                        &       & \multicolumn{4}{c}{\sc CBM}             & \multicolumn{4}{c}{CBM + \GLasso}
        \\
        \cmidrule(lr){3-6}
        \cmidrule(lr){7-10}
                        &       & \multicolumn{2}{c}{$\textsc{TopK}^*$}     & \multicolumn{2}{c}{\OurExactAlgorithm}
                                & \multicolumn{2}{c}{$\textsc{TopK}^*$}     & \multicolumn{2}{c}{\OurExactAlgorithm}
        \\
        \cmidrule(lr){3-4}
        \cmidrule(lr){5-6}
        \cmidrule(lr){7-8}
        \cmidrule(lr){9-10}
        {\sc Dataset}   & $b$   & {\sc Calls}   & {\sc Time}    & {\sc Calls}   & {\sc Time}    & {\sc Calls}   & {\sc Time}    & {\sc Calls}   & {\sc Time}
        \\
        \midrule
        \CelebA         & $39$  & $6$           & $0.001$       & $102$         & $0.007$       & --            & --            & --            & -- \\
        \Derm           & $28$  & $6$           & $0.004$       & $53$          & $0.025$       & $6$           & $0.004$       & $33$          & $0.016$ \\
        \WBC            & $31$  & $6$           & $0.005$       & $60$          & $0.040$       & $6$           & $0.006$       & $37$          & $0.026$ \\
        \CUB            & $312$ & $9$           & $0.704$       & $221$         & $16.607$      & $9$           & $0.704$       & $134$         & $10.667$
        \\
        \bottomrule
    \end{tabular}
    }
    \caption{\textbf{Runtime cost of all algorithms for regular CBMs}.  Time is measured in seconds.}
    \label{tab:extra-runtimes-cbms}
\end{table}

\begin{table}[!h]
    \centering
    \scalebox{0.9}{
    \begin{tabular}{lccccccccc}
    \toprule
                        &       & \multicolumn{4}{c}{\sc VLM-CBM}             & \multicolumn{4}{c}{VLM-CBM + \GLasso}
        \\
        \cmidrule(lr){3-6}
        \cmidrule(lr){7-10}
                        &       & \multicolumn{2}{c}{$\textsc{TopK}^*$}     & \multicolumn{2}{c}{\OurExactAlgorithm}
                                & \multicolumn{2}{c}{$\textsc{TopK}^*$}     & \multicolumn{2}{c}{\OurExactAlgorithm}
        \\
        \cmidrule(lr){3-4}
        \cmidrule(lr){5-6}
        \cmidrule(lr){7-8}
        \cmidrule(lr){9-10}
        {\sc Dataset}   & $b$   & {\sc Calls}   & {\sc Time}    & {\sc Calls}   & {\sc Time}    & {\sc Calls}   & {\sc Time}    & {\sc Calls}   & {\sc Time}
        \\
        \midrule
        \LFCBM          & $202$ & $8$           & $0.005$        & $633$        & $0.188$       & $8$           & $0.006$       & $308$         & $0.147$ \\
        \VLGCBM         & $419$ & $9$           & $0.012$        & $1286$       & $1.095$       & $9$           & $0.020$       & $593$         & $1.215$
        \\
        \bottomrule
    \end{tabular}
    }
    \caption{\textbf{Runtime cost of all algorithms for VLM-CBMs}.  Time is measured in seconds.}
    \label{tab:extra-runtimes-vlmcbms}
\end{table}

The results for CBMs in \cref{tab:extra-runtimes-cbms} and those for VLM-CBMs in \cref{tab:extra-runtimes-vlmcbms} tell the same story:  \OurExactAlgorithm makes far more calls to the oracle than the {\tt TopK$^*_{bs}$} algorithm, yet it remains affordable.  The reason behind the extra calls is the checking phase: {\tt TopK$^*_{bs}$} only invokes the oracle a logarithmic number of times (thanks to binary search) for growing an explanation; on the other hand, like \OriginalExactAlgorithm, \OurExactAlgorithm invokes it an additional linear number of times for ensuring subset minimality.  However, we stress that, in the worst case, computing a (compact) \AXp with \OurExactAlgorithm takes $11$s on \CUB for dense CBMs. While slower, it is very much affordable. The benefit, as we will see momentarily, is that \OurExactAlgorithm always outputs smaller explanations than the heuristics.

\subsubsection{\OurExactAlgorithm Finds The Most Compact Explanations}
\label{sec:extra-exp-ea-vs-heuristics-size}

\begin{table}[!h]
    \centering
    \scalebox{0.85}{
    \begin{tabular}{lcccccccc}
        \toprule
                        & \multicolumn{4}{c}{{\sc CBM}}                             & \multicolumn{4}{c}{CBM + \GLasso} \\
        \cmidrule(lr){2-5}
        \cmidrule(lr){6-9}
        {\sc Dataset}   & {\sc Act}         & {\sc Wei}         & {\sc Prod}        & \OurExactAlgorithm    & {\sc Act}         & {\sc Wei}         & {\sc Prod}        & \OurExactAlgorithm \\
        \midrule
        \CelebA         & \npm{35.0}{3.8}   & \npm{15.7}{4.9}   & \npm{29.6}{5.1}   & \npm{\bf 12.6}{3.7}   & --                & --                & --                & -- \\
        \Derm           & \npm{9.5}{2.8}    & \npm{13.4}{5.0}   & \npm{9.7}{2.5}    & \npm{\bf 8.3}{2.0}    & \npm{7.7}{1.8}    & \npm{2.3}{0.7}    & \npm{2.2}{0.9}    & \npm{\bf 2.1}{0.8} \\
        \WBC            & \npm{10.7}{4.3}   & \npm{17.5}{4.8}   & \npm{11.0}{4.9}   & \npm{\bf 8.2}{3.0}    & \npm{8.7}{3.2}    & \npm{3.5}{0.9}    & \npm{4.1}{1.3}    & \npm{\bf 2.4}{0.8} \\
        \CUB            & \npm{50.3}{19.0}  & \npm{84.7}{36.8}  & \npm{55.4}{21.8}  & \npm{\bf 46.3}{18.6}  & \npm{44.4}{14.6}  & \npm{30.0}{5.2}   & \npm{28.2}{4.3}   & \npm{\bf 23.5}{3.6} \\
        \bottomrule
    \end{tabular}
    }
    \caption{\textbf{Comparison between explanation sizes} across datasets for regular CBMs.}
    \label{tab:extra-explanation-size-cbms}
\end{table}

\begin{table}[!h]
    \centering
    \scalebox{0.85}{
    \begin{tabular}{lcccccccc}
        \toprule
                        & \multicolumn{4}{c}{{\sc CBM}}                             & \multicolumn{4}{c}{CBM + \GLasso} \\
        \cmidrule(lr){2-5}
        \cmidrule(lr){6-9}
        {\sc Dataset}   & {\sc Act}         & {\sc Wei}         & {\sc Prod}        & \OurExactAlgorithm    & {\sc Act}         & {\sc Wei}         & {\sc Prod}        & \OurExactAlgorithm \\
        \midrule
        \LFCBM          & \npm{202.0}{0.2}  & \npm{201.8}{0.5}  & \npm{201.8}{0.5}  & \npm{\bf 200.9}{0.9}  & \npm{201.3}{1.3}  & \npm{39.0}{6.8}   & \npm{39.1}{6.8}   & \npm{\bf35.3}{5.9} \\
        \VLGCBM         & \npm{418.6}{0.6}  & \npm{418.1}{1.1}  & \npm{418.2}{1.1}  & \npm{\bf 416.3}{1.5}  & \npm{418.0}{2.1}  & \npm{71.6}{12.2}  & \npm{71.7}{12.2}  & \npm{\bf 66.5}{11.3} \\ 
        \bottomrule
    \end{tabular}
    }
    \caption{\textbf{Comparison between explanation sizes} for \CUB using VLM-CBM vocabularies.}
    \label{tab:extra-explanation-size-vlm-cbms}
\end{table}

Next, we evaluate the average size of the explanations extracted by the various algorithms.
We report the results for regular and \GLasso-sparsified CBMs in \cref{tab:extra-explanation-size-cbms}, marking the smallest average explanations for each model in bold.
As expected, \OurExactAlgorithm -- thanks to the shrink step (\cref{alg:extra-ea-plus}) -- \textit{always} finds smaller explanations, and this holds for both dense and sparse CBMs.

\textbf{Results for CBMs}.  For dense CBMs (\cref{tab:extra-explanation-size-cbms}, left), compared to the runner-up, \OurExactAlgorithm achieves average size $12.6$ (vs. $15.7$ for the runner up, bottleneck size $b = 39$) in \CelebA; $8.3$ (vs. $9.7$, $b = 28$) for \Derm; $8.2$ (vs. $10.7$, $b = 31$) for \WBC; and $46.3$ (vs. $50.3$, $b = 312$) for \CUB.  The difference is in-between $1.4$ activations on average for \Derm and $9$ for \CUB.
For \GLasso-sparsified CBMs (\cref{tab:extra-explanation-size-cbms}, right), the gap tightens, further supporting our usage of \GLasso to contrast \cref{prop:no-apx-for-regular-cbms}.  The only algorithm showing no benefit from \GLasso is \TopKAct, which does not directly depend on the learned weights.
\OurExactAlgorithm still outperforms all heuristics, but (compared to the runner-up) by less than one activation on average on $\Derm$, about $1$ activation in \WBC, and $5$ activations on \CUB.

\textbf{Results for VLM-CBMs}.  We report additional results on dense and \GLasso-sparsified VLM-CBMs in \cref{tab:extra-explanation-size-vlm-cbms}.  Recall that the \LFCBM vocabulary comprises $202$ activations and the \VLGCBM one $419$.  Here, concept activations are cosine similarities, which explains the large explanations output by all algorithms -- and the small gaps between them -- when using dense inference layers (\cref{tab:extra-explanation-size-vlm-cbms}, left):  this is yet another instance of \cref{prop:no-apx-for-regular-cbms} and it is consistent with the poor results reported for cosine similarity activations in \cref{sec:q2}.
The picture is more favorable for sparse CBMs, with explanation sizes dropping sharply for all algorithms (except \TopKAct, as above).
Again, \OurExactAlgorithm is the winner across the board, with an advantage of $4$ activations on average for the \LFCBM vocabulary and of $5$ activations for the \VLGCBM one.

\textbf{Summary}.  While \OurExactAlgorithm provides a modest advantage, size-wise, over the competitors, it is by far the most reliable alternative, as there is no obvious ranking between heuristics, barring poor interactions between \TopKAct with sparsification.

\subsubsection{\OurExactAlgorithm Always Finds \AXp{s}, Heuristics Very Often Do Not}
\label{sec:extra-ea-vs-heuristics-minimality}

We evaluate whether the algorithms are capable of computing explanations that are both faithful (sufficient to defend the prediction) \textit{and} subset-minimal, as required by \cref{def:axps}.  Since all algorithms, including our {\tt TopK} heuristics, always return a \textit{weak} \AXp, as explained in \cref{sec:extra-algorithms}, we only need to check how frequently these are also subset-minimal.

\begin{table}[!h]
    \begin{minipage}{.5\linewidth}
        \centering
        \scalebox{0.8}{
            \begin{tabular}{lccccc}
            \toprule
                            & \multicolumn{3}{c}{\sc TopK} \\
            \cmidrule(lr){2-4}
            {\sc Dataset}   & {\sc Act} & {\sc Wei} & {\sc Prod}    & \OurExactAlgorithm \\
            \midrule
            \CelebA         & $0\%$     & $10\%$    & $1\%$         & ${\bf 100}\%$ \\
            \Derm           & $37\%$    & $2\%$     & $37\%$        & ${\bf 100}\%$ \\
            \WBC            & $9\%$     & $0\%$     & $12\%$        & ${\bf 100}\%$ \\
            \CUB            & $5\%$     & $1\%$     & $5\%$         & ${\bf 100}\%$ \\
            \midrule
            Average         & $12.8\%$ & $3.5\%$   & $13.8\%$       & ${\bf 100}\%$ \\
            \bottomrule
        \end{tabular}
        }
    \end{minipage}%
    \begin{minipage}{.5\linewidth}
        \centering
        \scalebox{0.8}{
        \begin{tabular}{lccccc}
            \toprule
                            & \multicolumn{3}{c}{\sc TopK} \\
            \cmidrule(lr){2-4}
            {\sc Dataset}   & {\sc Act} & {\sc Wei} & {\sc Prod}    & \OurExactAlgorithm \\
            \midrule
            \CelebA         & --        & --        & --            & -- \\
            \Derm           & $0\%$     & $81\%$    & $85\%$        & ${\bf 100}\%$ \\
            \WBC            & $0\%$     & $18\%$    & $22\%$        & ${\bf 100}\%$ \\
            \CUB            & $0\%$     & $1\%$     & $1\%$         & ${\bf 100}\%$ \\
            \midrule
            Average         & $0\%$     & $33.4\%$  & $36\%$        & ${\bf 100}\%$ \\
            \bottomrule
        \end{tabular}
        }
    \end{minipage} 
    \caption{\textbf{Whereas \OurExactAlgorithm always computes \AXp{s}, heuristics do not}. \textbf{Left}: dense CBMs. \textbf{Right}: \GLasso-sparsified CBMs.}
    \label{tab:extra-percent-sm-axps}
\end{table}

We compute the \textit{percentage of subset-minimal explanations} across $100$ test instances and report the results for regular CBMs in \cref{tab:extra-percent-sm-axps}.
The table indicates that, as expected, \OurExactAlgorithm retrieves subset-minimal \AXp{s} for every instance, model and dataset.
The heuristics, on the other hand, never achieve $100\%$ subset-minimality.  For dense CBMs, they attain at best $37\%$ subset-minimality (on \Derm with \TopKAct and \TopKProd), but the percentage can go as low as $0\%$ (on \CelebA with \TopKAct).  The weight-based heuristics, \TopKWei and \TopKProd, fare better on \GLasso-sparsified CBMs, in that they manage to achieve high percentage of subset-minimality on \Derm (up to $85\%$); yet they still fail catastrophically on \CUB, where the best heuristic attains $1\%$ subset-minimality.
In conclusion, these results demonstrate the advantages of \OurExactAlgorithm over common heuristics.

\clearpage
\subsection{Extended Q1: Logits vs. ReLUs vs. Probabilities}
\label{sec:extra-exp-logits-vs-relus-vs-proba}

We report a comparison of all common concept activation modeling strategies -- logits, ReLU activations, probabilities -- in \cref{fig:extra-results-logits-vs-logits-proba}.

\begin{figure}[!h]
    \centering
    \begin{tabular}{ccc}
        \includegraphics[width=0.315\linewidth]{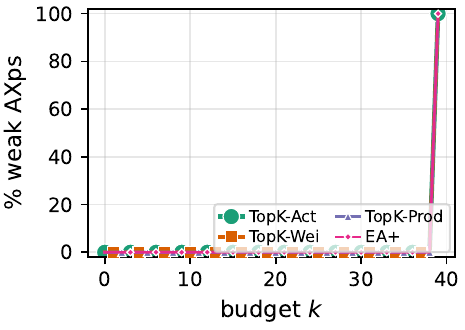}
        \includegraphics[width=0.315\linewidth]{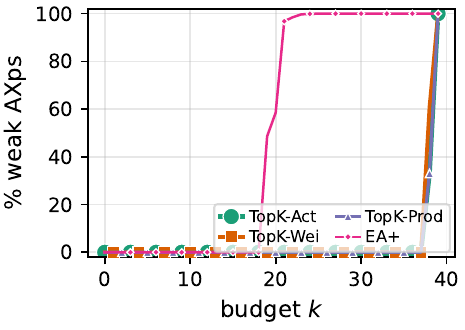}
        \includegraphics[width=0.315\linewidth]{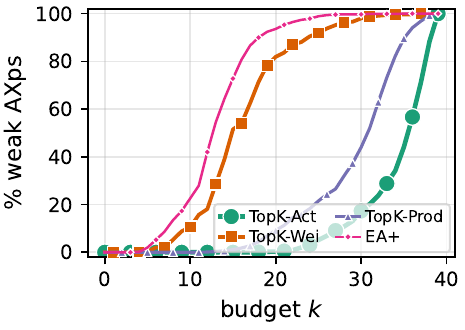}
        \\
        \includegraphics[width=0.315\linewidth]{figures/kstar_cdf_derm7pt_logit_dense.pdf}
        \includegraphics[width=0.315\linewidth]{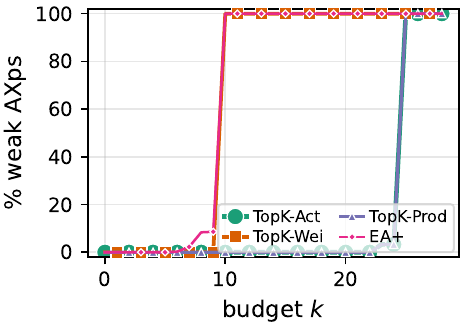}
        \includegraphics[width=0.315\linewidth]{figures/kstar_cdf_derm7pt_prob_dense.pdf}
        \\
        \includegraphics[width=0.315\linewidth]{figures/kstar_cdf_wbcatt_logit_dense.pdf}
        \includegraphics[width=0.315\linewidth]{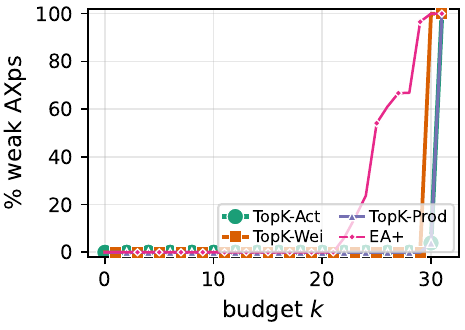}
        \includegraphics[width=0.315\linewidth]{figures/kstar_cdf_wbcatt_prob_dense.pdf}
        \\
        \includegraphics[width=0.315\linewidth]{figures/kstar_cdf_cub_logit_dense.pdf}
        \includegraphics[width=0.315\linewidth]{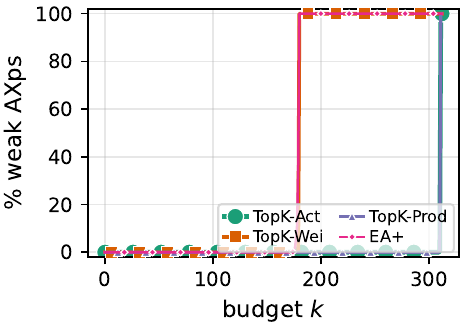}
        \includegraphics[width=0.315\linewidth]{figures/kstar_cdf_cub_prob_dense.pdf}
    \end{tabular}
    \caption{\textbf{Modeling concepts as probabilities shrinks \AXp{s} the most} on all datasets.  From the top: \CelebA, \Derm, \WBC, \CUB). Left to right: logits, ReLU, probabilities.}
    \label{fig:extra-results-logits-vs-logits-proba}
\end{figure}

\textbf{ReLUs sit in-between logits and probabilities}.  All datasets display the same trend:  logits ($\calC = (-\infty, +\infty)^b$) are pathologic, probabilities ($\calC \subseteq [0, 1]^b$) enable more compact \AXp{s} for all algorithms, and ReLU activations ($\calC = [0, \infty)^b$) sit in-between, as shown by the curves in the middle row.  This is perfectly in line with \cref{prop:no-apx-for-regular-cbms}.

\textbf{\TopKAct and \TopKProd underperform on \CelebA}.  This is evident when comparing the curves of the heuristic algorithms in \CelebA (top row) and on the other datasets (other rows).  The reason is simple.  This is because \CelebA only has binary concepts, which means $\calAct = [0, 1]^b$, with no simplex structure, leaving the adversary less constrained than in other datasets.  This directly penalizes heuristics focusing on the highest activations.

\clearpage
\subsection{Extended Q2: \ENet vs. \GLasso for Probabilities}
\label{sec:extra-exp-sparsity-on-proba}

\cref{fig:extra-results-dense-vs-enet-vs-glasso} reports a complete comparison of \ENet vs. \GLasso sparsification for all datasets with categorical concepts, as these are needed to define the groups used by \GLasso.  These include \Derm, \WBC, and \CUB, but not \CelebA.  As in \cref{sec:q2}, we focus on CBMs with probabilistic activations, which are much better behaved than CBMs with logit activations (as per \cref{sec:q1}).

\begin{figure}[!h]
    \centering
    \begin{tabular}{cccc}
         \includegraphics[width=0.315\linewidth]{figures/kstar_cdf_derm7pt_prob_dense.pdf}
         \includegraphics[width=0.315\linewidth]{figures/kstar_cdf_wbcatt_prob_dense.pdf}
         \includegraphics[width=0.315\linewidth]{figures/kstar_cdf_cub_prob_dense.pdf}
         \\
         \includegraphics[width=0.315\linewidth]{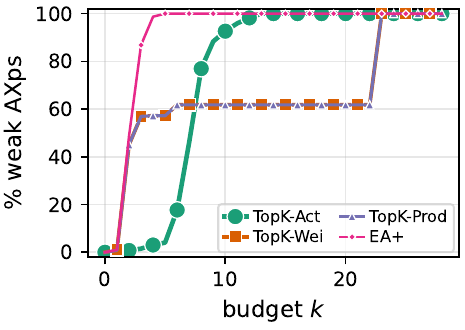}
         \includegraphics[width=0.315\linewidth]{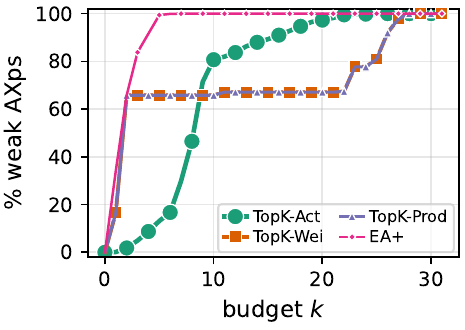}
         \includegraphics[width=0.315\linewidth]{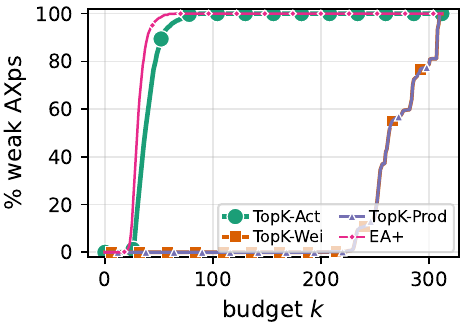}
         \\
         \includegraphics[width=0.315\linewidth]{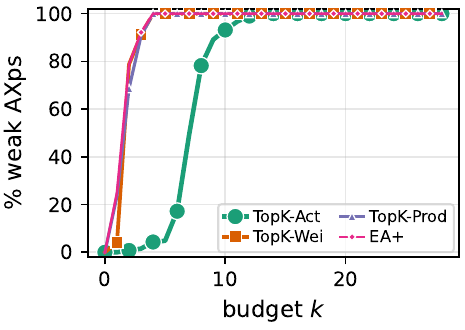}
         \includegraphics[width=0.315\linewidth]{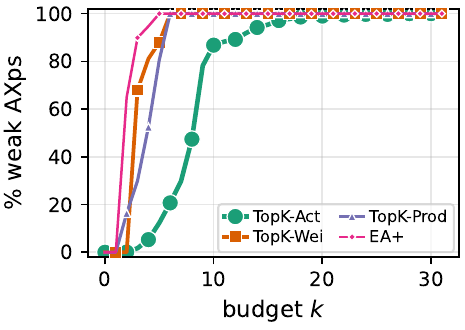}
         \includegraphics[width=0.315\linewidth]{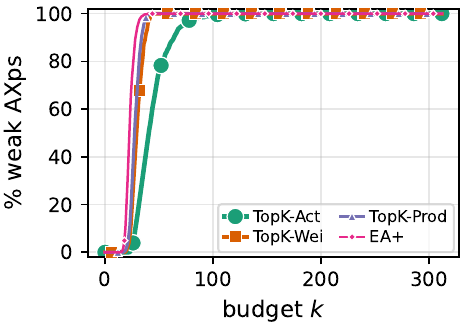}
    \end{tabular}
    \caption{\textbf{\ENet (middle) works partially, \GLasso (below) works consistently.}}
    \label{fig:extra-results-dense-vs-enet-vs-glasso}
\end{figure}

\textbf{\ENet partially works, \GLasso is more consistent.}  The results are compatible with those reported in \cref{sec:q2} for \CUB: while \ENet primarily helps \OurExactAlgorithm, it makes some heuristics -- \TopKWei and \TopKProd -- misbehave as it learns larger non-zero weights.  On the other hand, \GLasso helps more consistently across algorithms.  At the same time, \TopKAct does not benefit from sparsification as much for either penalty.

\clearpage
\subsection{Extended Q2: Sparsity on VLM-CBMs}
\label{sec:extra-exp-sparsity-on-vlm-cbms}

Here, we report the complete FCCs for all VLM-CBM vocabularies.  Recall that concept activations are modelled as cosine similarities.

\begin{figure}[!h]
    \centering
    \begin{tabular}{ccc}
        \includegraphics[width=0.315\linewidth]{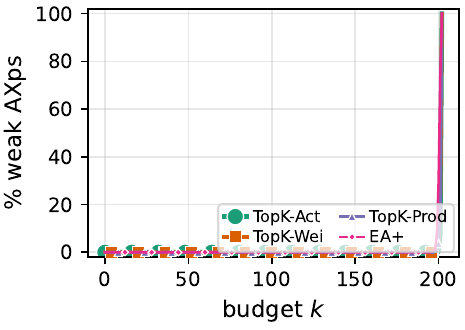}
        \includegraphics[width=0.315\linewidth]{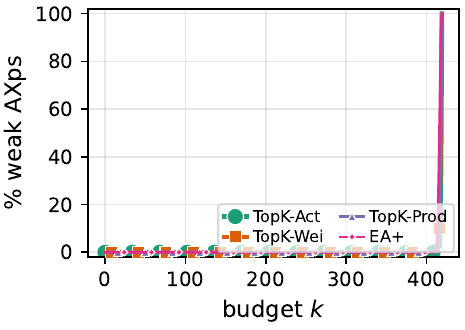}
        \includegraphics[width=0.315\linewidth]{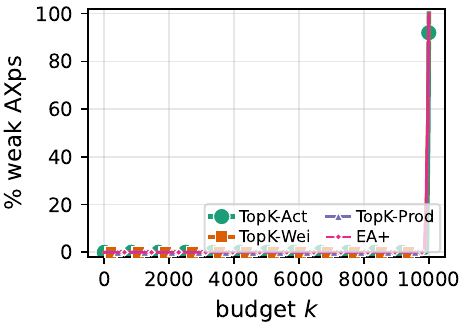}
        \\
        \includegraphics[width=0.315\linewidth]{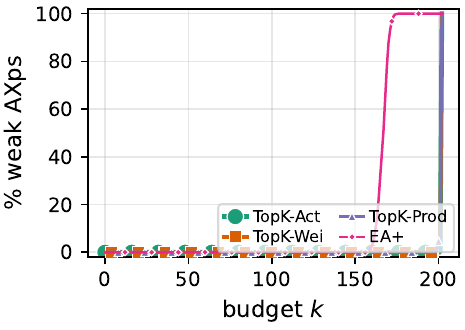}
        \includegraphics[width=0.315\linewidth]{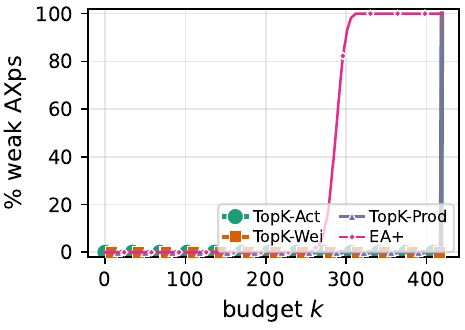}
        \includegraphics[width=0.315\linewidth]{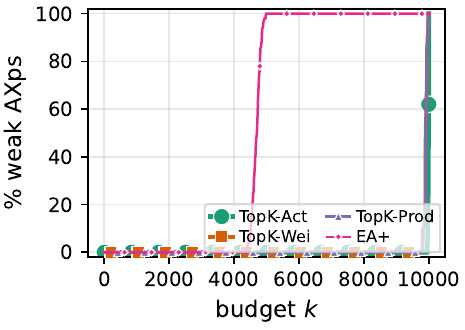}
        \\
        \includegraphics[width=0.315\linewidth]{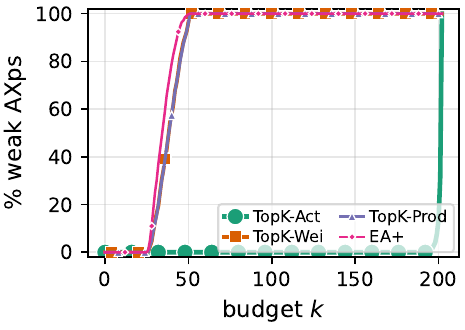}
        \includegraphics[width=0.315\linewidth]{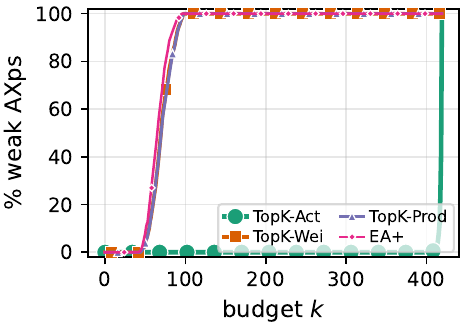}
        \includegraphics[width=0.315\linewidth]{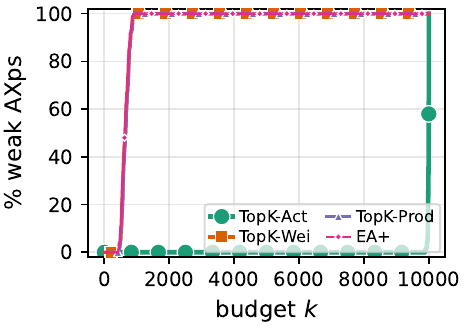}
        \\
    \end{tabular}
    \caption{\textbf{Q3: For VLM-CBMs, \GLasso outperforms other sparsification strategies}, as shown here with \CUB.  Each VLM-CBM employs its own LLM-generated vocabulary, with \LaBo's sporting $10,000$ activations.}
    \label{fig:extra-results-exp-sparsity-on-vlm-cbms}
\end{figure}

\textbf{\GLasso outperforms \ENet on all vocabularies}.  The FCCs in \cref{fig:extra-results-exp-sparsity-on-vlm-cbms} -- dense above, \ENet in the middle, and \GLasso at the bottom -- indicate a large margin in the percentage of explained examples admitting a smaller weak \AXp.  As before, despite employing bounded concept activations, dense VLM-CBMs display pathologic behavior, precisely because their weights are dense.
Introducing \ENet (middle row) helps: focusing on \OurExactAlgorithm for \LFCBM, about $80\%$ explained instances admit (weak) \AXp{s} of size $\sim 160$ (where the full bottleneck holds $202$ activations); for \VLGCBM, of size $\sim 300$ (compared to $419$); for \LaBo, of size $\sim 5{,}000$ (compared to $10{,}000$). The heuristic algorithm, however, do not see any size improvement, backing up our observations in \cref{sec:q2}.
In stark contrast, \GLasso dramatically shrinks explanations of all algorithms except \TopKAct (again, as noted in \cref{sec:q2}).

\textbf{VLM-CBMs resist compactly faithul explanations}.  Despite the substantial improvements given by \GLasso, faithful explanations of size $100$ (for \VLGCBM) and up to $500$ (for \LaBo) are unlikely to be human interpretable.  This supports our key message: large concept bottlenecks should not be (always, at least) equated with reliable interpretability.  We discuss possible ways to overcome this roadblock in \cref{sec:conclusion} and leave an extensive evaluation to future work.

\clearpage
\subsection{Extra: SelectiveCBM}
\label{sec:extra-cbm-vs-selective-cbm}

Compared to standard CBMs, Selective CBMs \citep{schrodi2025selective} sparsify both concept activations (per-input) and weights (across inputs) by combining an instance-dependent gating function with an elastic net penalty.

To assess the benefits of this double sparsification strategy, we compare Selective CBMs against:
\textit{i}) a baseline, namely dense CBMs with ReLU concept activations; and
\textit{ii}) sparsified CBMs (\ENet for \CelebA and \GLasso for \WBC) with logit activations.
We apply them to two datasets, \CelebA (no categorical concepts, \ie{} \GLasso is unnecessary) and \WBC (several categorical concepts, \ie{} \GLasso is useful).
As usual, we maximize $\lambda$ subject to the constraint that $F_1^Y$ is at most $10\%$ worse than the dense baseline, for fairness.

\begin{figure}[!h]
    \centering
    \begin{tabular}{ccc}
        \includegraphics[width=0.315\linewidth]{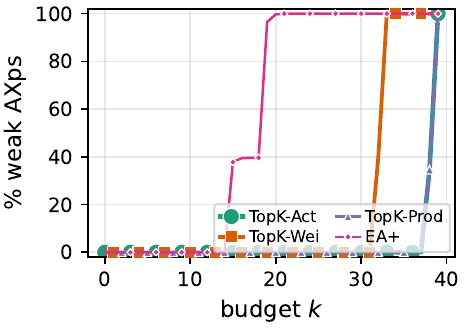}
        \includegraphics[width=0.315\linewidth]{figures/kstar_cdf_celeba_relu_dense.pdf}
        \includegraphics[width=0.315\linewidth]{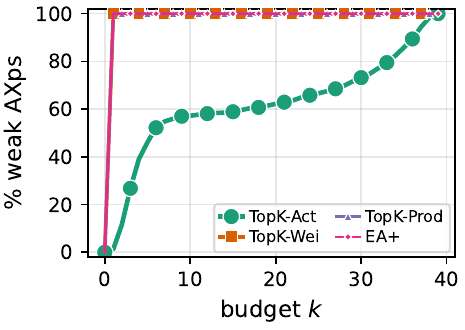}
        \\
        \includegraphics[width=0.315\linewidth]{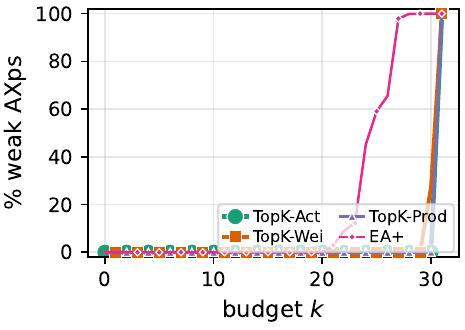}
        \includegraphics[width=0.315\linewidth]{figures/kstar_cdf_wbcatt_relu_dense.pdf}
        \includegraphics[width=0.315\linewidth]{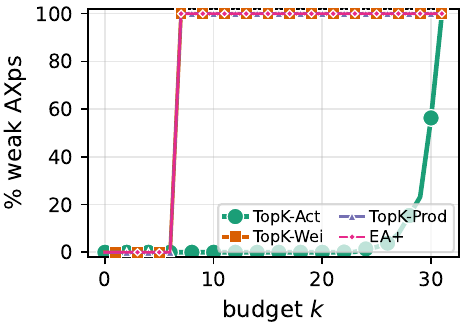}
    \end{tabular}
    \caption{\textbf{Selective CBMs behave like ReLU CBMs and worse than sparsified CBMs} in terms of \AXp size, shown here for \CelebA (top) and \WBC (bottom).}
    \label{fig:extra-results-cbm-vs-selective-cbm}
\end{figure}

\textbf{Selective CBMs behave similarly to ReLU CBMs.}  \cref{fig:extra-results-cbm-vs-selective-cbm} shows that Selective CBMs (left) behave similarly to ReLU CBMs (middle), \ie they provide a moderate improvement to \AXp size compared to regular CBMs (which are pathological, as shown in \cref{sec:q1}).  The improvement applies chiefly to \OurExactAlgorithm (in \textcolor{magenta}{\textbf{pink}}), whereas all heuristics continue to behave poorly.
However, like ReLU CBMs, Selective CBMs underperform compared to appropriately sparsified CBMs even when these employ raw logits (right).  This is most visible for \OurExactAlgorithm, \TopKWei, and \TopKProd, which all display the same trend, whereas \TopKAct lags behind by a fair amount.

These results support our arguments in \cref{sec:existing-cbms-suck,sec:proofs-appendix} that, from the perspective of \AXp{s}, SelectiveCBM's learnable gating function is equivalent to a pure ReLU:  neither prevents the construction of counterexamples whenever the prediction can be flipped by pushing a concept activation $\Act_j$ to $+\infty$.  The only difference is the threshold for constructing the counterexample: for ReLUs it is $0$, for Selective CBMs it is a possibly non-zero positive constant (\ie the learned gating threshold).

\end{document}